\documentclass{article}
\PassOptionsToPackage{dvipsnames,table}{xcolor}

\usepackage{iclr2027_conference,times}
\iclrfinalcopy

\usepackage{enumitem}

\usepackage{xr-hyper}

\usepackage[utf8]{inputenc} \usepackage[T1]{fontenc}    \usepackage{hyperref}       \usepackage{url}            \usepackage{booktabs}       \usepackage{amsfonts}       \usepackage{nicefrac}       \usepackage{microtype}      \usepackage{graphicx}       \usepackage{float}          \usepackage[dvipsnames,table]{xcolor}
\usepackage[most]{tcolorbox}
\usepackage{wrapfig}
\usepackage{subcaption}
\usepackage{multirow}
\usepackage{makecell}
\usepackage{colortbl}
\definecolor{lightgray}{gray}{0.8}

\definecolor{takeawayframe}{HTML}{2F5C8A}
\definecolor{takeawayback}{HTML}{F2F6FB}
\definecolor{figmagenta}{HTML}{AA3377}
\definecolor{figgrey}{HTML}{999999}
\definecolor{figred}{HTML}{CC3311}
\definecolor{termblue}{HTML}{0B3D91}
\definecolor{termpurple}{HTML}{552288}

\newtcolorbox{takeaway}{enhanced, breakable, sharp corners, boxrule=0pt, leftrule=2.2pt,
  colback=takeawayback, colframe=takeawayframe,
  left=6pt, right=6pt, top=4pt, bottom=4pt,
  before skip=8pt, after skip=8pt}
\usepackage{enumitem}

\usepackage{amsmath,amssymb,amsthm,bm}
\usepackage{mathtools}
\IfFileExists{algorithm2e.sty}{\usepackage{algorithm2e}
  \RestyleAlgo{ruled}
}{\usepackage{algorithm_fallback}
}
\theoremstyle{plain}
\newtheorem{theorem}{Theorem}[section]
\newtheorem{proposition}[theorem]{Proposition}
\newtheorem{lemma}[theorem]{Lemma}
\newtheorem{corollary}[theorem]{Corollary}
\theoremstyle{definition}

\newtheorem{assumption}[theorem]{Assumption}
\theoremstyle{remark}

\newcommand{\1}{\mathbf{1}}
\newcommand{\E}{\mathbb{E}}

\newcommand{\argmax}{\mathop{\mathrm{argmax}}}

\newcommand{\prob}{\mathbb{P}}

\usepackage{xspace}

\newcommand*\circledred[1]{\tikz[baseline=(char.base)]{
  \node[shape=circle, draw=BrickRed!60, fill=BrickRed!10, thick, inner sep=1pt] (char) {\scriptsize\textsf{#1}};
}}
\newcommand*\circledblue[1]{\tikz[baseline=(char.base)]{
  \node[shape=circle, draw=NavyBlue!60, fill=NavyBlue!10, thick, inner sep=1pt] (char) {\scriptsize\textsf{#1}};
}}
\usepackage{pifont}

\DeclareRobustCommand{\redmarker}{\tikz[baseline= -0.6ex]{
    \fill[black] (0,0) circle (0.22ex);
    \draw[figred, line width=1.5pt] (0,0) circle (0.7ex);
  }}

\definecolor{rqblue}{HTML}{5677C8}
\definecolor{rqbluebg}{HTML}{EAF0FF}

\definecolor{rqcyan}{HTML}{4F91A8}
\definecolor{rqcyanbg}{HTML}{E8F5F8}

\definecolor{rqpurple}{HTML}{8A4FA3}
\definecolor{rqpurplebg}{HTML}{F3EAF7}

\newcommand{\rqbadge}[3]{\leavevmode
  \tcbox[
    on line,
    boxsep=0pt,
    left=1.5pt,
    right=1.5pt,
    top=0.5pt,
    bottom=0.15pt,
    arc=2pt,
    boxrule=0.5pt,
    colframe=#1,
    colback=#2
  ]{\raisebox{-0.03ex}{\textcolor{#1}{\strut\scriptsize\bfseries #3}}}}

\newcommand{\rqone}{\rqbadge{rqblue}{rqbluebg}{RQ1}\ }
\newcommand{\rqtwo}{\rqbadge{rqcyan}{rqcyanbg}{RQ2}\ }
\newcommand{\rqthree}{\rqbadge{rqpurple}{rqpurplebg}{RQ3}\ }

\makeatletter
\def\maketag@@@#1{\hbox{\m@th\normalfont\normalsize#1}}
\makeatother

\title{Efficient Best-of-N policy evaluation for inference-time alignment}

\author{
\makebox[0.96\textwidth][c]{Jonas Schweisthal$^{1,*}$ \qquad
Yuxin Wang$^{1,*}$ \qquad
Athiya Deviyani$^{2}$}\\[0.4em]
\makebox[0.96\textwidth][c]{\textbf{Stefan Feuerriegel}$^{1}$ \qquad
\textbf{Dennis Frauen}$^{3}$}\\[0.6em]
correspondence to: \texttt{jonas.schweisthal@lmu.de}, \texttt{yuxin.wang1@lmu.de}
}

\begin{document}

\maketitle

\begingroup
\renewcommand\thefootnote{}
\footnotetext{\small$^{1}$LMU Munich \& MCML, \quad $^{2}$Carnegie Mellon University, \quad $^{3}$Meta (work in personal capacity). \quad $^{*}$Equal contribution.}
\endgroup

\begin{abstract}
Best-of-$N$ (BoN) is a common inference-time alignment method that selects the highest-scoring response among $N$ samples from a reference model. Evaluating BoN policies from logged data is challenging under sample-only access because standard off-policy estimators require density ratios that depend on unavailable response likelihoods. In this paper, we propose a sample-only framework for evaluating and selecting BoN policies without access to these likelihoods. We show that the order-statistic structure of BoN allows the required density ratios to be expressed through score-rank probabilities that are estimable from samples alone. We then develop a doubly robust estimator of the BoN policy value (BoN-DR) that efficiently reuses a shared auxiliary sample pool across candidate budgets. We establish valid asymptotic inference even under reward estimator misspecification and prove the efficiency of our BoN-DR estimator. Since larger budgets can amplify errors in the score function and lead to reward overoptimization, we derive two selection rules: (i)~maximizing the estimated policy value and (ii)~maximizing a lower confidence bound on the improvement over the reference policy, which accounts for estimation uncertainty and provides a no-harm guarantee. Across synthetic experiments and GSM8K with multiple reference and reward models, our framework accurately estimates BoN policy values and selects effective sampling budgets.
\end{abstract}

\section{Introduction}

In \textit{inference-time alignment}, the aim is to improve the behavior of a given generative model using sampled outputs and an auxiliary reward or preference signal, but \textit{without} updating or accessing the model parameters \citep{huang2025best, khalaf2025inferencetime}. This is relevant in practice, where one often interacts with a model through a restricted interface and cannot modify the underlying parameters. For example, many large companies want to adapt frontier LLMs with API-only access to their own preferences and needs \citep{kosmas2025code}, such as a car manufacturer using collected user feedback to improve a third-party in-car assistant.

A widely used approach is \textit{Best-of-$N$} (BoN) sampling \citep{nakano2021webgpt, beirami2025guarantees}, which has two steps: (1)~for a prompt $x$, the reference model (e.g., the LLM) generates $N$ candidate responses; and (2)~a score function $s$ (e.g., an externally pre-trained reward model or an LLM judge) selects the highest-scoring response \citep{nakano2021webgpt, beirami2025guarantees}. Importantly, BoN requires \textbf{only} samples and \textbf{no} modification of the reference model. However, deploying BoN in practice is subject to two challenges:

\textit{\textbf{Challenge} \circledred{1}: Evaluating BoN policies under sample-only access.} Practitioners need to know whether and, in particular, \textit{how much} a BoN policy actually improves the true downstream reward (and not only the score) over the reference model. From logged responses and rewards, this is an \textit{off-policy} evaluation (OPE) problem. Yet, standard OPE estimators (e.g., importance-weighted and doubly robust estimators) are \textbf{not} applicable. The reason is that these estimators require the density ratio between the target policy and the policy that generated the logged data \citep{Dudik.2011}. Under sample-only access, however, the probabilities assigned to complete responses, i.e., the response likelihoods, are unavailable. $\Rightarrow$ \textit{We therefore need an estimator that evaluates BoN policies \textbf{without} evaluating the response likelihoods}.

\textit{\textbf{Challenge} \circledred{2}: Selecting the sampling budget $N$.}
The sampling budget $N$ controls how strongly BoN optimizes the score: a small $N$ limits the potential benefit, while a large $N$ concentrates the policy on low-probability, high-scoring responses. When the score function is an imperfect proxy for the downstream reward, this can amplify its errors and reduce downstream performance. This issue is often known as \textit{reward overoptimization} \citep{gao2023scaling, coste2024reward}. Figure~\ref{fig:motivation} illustrates the issue on GSM8K: the score function of the selected response increases with $N$, while the downstream accuracy can peak and then decline. The score function alone therefore does not reveal which $N$ should be deployed; instead, this requires estimating the policy value of each candidate budget. However, even unbiased value estimates are noisy in finite samples, especially for large $N$, where the BoN policy concentrates on responses that are rarely observed in the logged data. Directly maximizing them can thus lead to harmful budget choices. $\Rightarrow$ \textit{We therefore need a selection rule that accounts for uncertainty when comparing candidate budgets and guards against harmful departures from the reference policy.}

\begin{figure}[t]
    \centering
    \includegraphics[width=\textwidth]{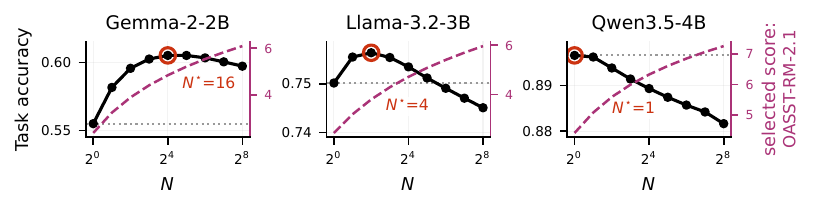}
    \vspace{-1.0cm}
    \caption{\textbf{The score function does not reveal the optimal inference-time budget.} Best-of-$N$ on GSM8K~\citep{cobbe2021training} for three reference models using OASST-RM-2.1 as the score model. \textbf{Black solid:} task accuracy computed from $512$ stored responses for each of the $1{,}319$ held-out test prompts. \textbf{\textcolor{figmagenta}{Magenta dashed}:} mean selected-response score (right axis). \textbf{\textcolor{figgrey}{Grey dotted}:} reference-model accuracy ($N=1$). \redmarker\ Red circles mark the accuracy-optimal budget $N^\star$. $\Rightarrow$~\textbf{Takeaway:} The proxy score increases with $N$, but accuracy peaks and then falls, showing that the score function alone cannot determine the best sampling budget.
    }
    \vspace{-0.7cm}
    \label{fig:motivation}
\end{figure}

In this paper, \textbf{we address both challenges by developing a sample-only framework for evaluating BoN policies and selecting the inference-time sampling budget $N$.} We proceed in two steps:

\circledblue{1}~\textbf{Estimation} ($\rightarrow$ Section~\ref{sec:ope}): To develop an estimator, we exploit a statistical property of BoN: the underlying order-statistic structure makes the otherwise unavailable density ratio between the BoN policy and the reference model expressible through the probabilities that a fresh reference-model sample scores below or at a given response, which can be estimated from samples alone. Building on this, we construct a doubly robust estimator of the BoN policy value (called BoN-DR) \textit{without} access to response likelihoods. Here, BoN-DR separates the score used to \textit{select} responses from the reward estimator used to \textit{evaluate} them. Further, it reuses a shared auxiliary sample pool across candidate budgets and is unbiased for any finite pool size. We show that our BoN-DR yields asymptotically valid inference even when the reward estimator is misspecified, and that it attains the efficiency of an exact-ratio DR estimator as the pool grows and the reward estimator is consistent.

\circledblue{2}~\textbf{Selecting $N$} ($\rightarrow$ Section~\ref{sec:select_N}):
To select the sampling budget $N$, we develop two selection rules: (i)~\textit{Value Maximization} chooses the budget with the largest estimated policy value; and (ii)~\textit{Safe Improvement} chooses the budget with the largest lower confidence bound (LCB) on the improvement over the reference policy, with a fallback to $N=1$ if no LCB is positive. We show that both rules asymptotically attain the best policy value on the candidate grid, while Safe Improvement additionally provides an asymptotic no-harm guarantee relative to the reference policy with high probability. To compute the LCB, we further construct simultaneous confidence bounds across candidate budgets. Unlike pointwise intervals, these bounds account for searching over the candidate grid and remain asymptotically valid after $N$ is selected from the same data.

Our \textbf{contributions} are fourfold:\footnote{Code available at: \url{https://github.com/JSchweisthal/Efficient-BoN}.} \textbf{(1)}~We develop BoN-DR, a novel, unbiased doubly robust estimator for evaluating BoN policies from logged data, which requires only sample access to the reference model. \textbf{(2)}~We prove statistical guarantees for BoN-DR, including robustness to reward estimator misspecification, asymptotic normality, and efficiency. \textbf{(3)}~We propose two principled selection rules for $N$ (i.e., Value Maximization and Safe Improvement) with consistency and a no-harm guarantee. \textbf{(4)}~We evaluate our methods in synthetic experiments and on GSM8K across multiple reference and score models.

\section{Related work\protect\footnote{We provide an extended related work in Appendix~\ref{app:related_work}}.}
\label{sec:related_work}

\textbf{BoN for inference-time alignment.}
BoN is widely used for inference-time alignment with reward or preference models \citep{stiennon2020learning, nakano2021webgpt}. Prior work studies the induced policy, its divergence to the reference policy, and the theoretical and empirical scaling behavior \citep[e.g.,][]{gui2024bonbon, yang2024asymptotics, mroueh2025information, beirami2025guarantees, sriraman2026revisiting}. Larger budgets can amplify proxy errors and cause reward overoptimization, motivating methods that modify the score, selection rule, or underlying model \citep{gao2023scaling,coste2024reward,huang2025best,jinnai2025regularized}. We instead keep standard BoN fixed and study how to evaluate its downstream value and select its budget from logged data. \citet{khalaf2025inferencetime} consider a closely related problem, where they tune the BoN budget using proxy and true rewards of sampled responses. However, their method requires true rewards for all sampled candidates and does \textbf{not} provide uncertainty quantification. In contrast, we select $N$ from logged data with only one observed reward per prompt and provide statistical guarantees.

\textbf{Off-policy evaluation (OPE).}
Classical OPE estimates a target policy's value from data collected under a behavior policy using replay, importance weighting, or doubly robust estimation. \citep{li2011unbiased,horvitz1952generalization,strehl2010learning,Dudik.2011}. High-confidence OPE and safe policy improvement further use lower confidence bounds to guard against underperforming a baseline \citep{thomas2015highOPE, thomas2015highimprove, laroche2019safe}. These methods cannot be directly applied to BoN under sample-only access because the required target-to-behavior density ratio depends on \textit{unavailable} response likelihoods. We later exploit the order-statistic structure in BoN to express this ratio through score-rank probabilities, thereby enabling policy value estimation and uncertainty-aware budget selection from sampled responses and logged rewards.

\section{Problem Setup}

\vspace{-0.1cm}
We consider a context $X \in \mathcal{X}$, an action $A \in \mathcal{A}$, and a scalar reward $Y \in \mathcal{Y}$. We call any (possibly stochastic) decision rule $\pi$ mapping a context to an action a \textit{policy}. We assume that the observed action $A$ is generated by a \textit{behavior} or \textit{reference policy} $\pi_b(a \mid x) = \prob(A=a | X=x)$. For example, in the LLM setting, $\pi_b$ is induced by a \textit{reference model} that generates a text $A$ given a prompt $X$, based on which we obtain downstream signal $Y$ (e.g., task success, human preference, click-through rate, etc.). We denote the reward function as $\mu(x, a) = \E[Y | X=x, A=a]$, and we observe an i.i.d.\ logged dataset $\mathcal D=\{(x_i,a_i,y_i)\}_{i=1}^n$ from the population $Z = (X, A, Y) \sim \mathbb{P}$.

\textbf{Inference-time alignment.} We assume that we can \textit{sample} from $\pi_b(\cdot\mid x)$ for any context $x$, but can \underline{neither} access the model weights \underline{nor} evaluate the response likelihood $\pi_b(a\mid x)$. This is typical for API-based models, which return sampled responses but no token-level probabilities. Our goal is to improve the deployed policy using only such samples and the logged data $\mathcal D$.

\textbf{BoN policies.} A well-established method for inference-time alignment is \textit{best-of-N sampling} (BoN) \citep{nakano2021webgpt, beirami2025guarantees}. Here, a score function $s(x,a)$ first ranks actions $a$ for each context $x$. For example, $s(x,a)$ could be an LLM judge \citep{zheng2023judging} or an estimator of the reward function $\mu(x, a)$. Then, for a fixed context $x_i \in \mathcal{X}$, BoN samples $N$ i.i.d. actions
$a_{i1},\dots,a_{iN} \sim \pi_b(\cdot \mid x_i)$ from the API and chooses the one with the largest score
$a^{\star,s}_N(x_i) \in \arg\max_{j \leq N} s(x_i,a_{i,j})$.
The induced BoN policy is defined as $\pi_N^s(a\mid x_i)
:=
\prob\!\left(a^{\star,s}_N(x_i)=a \mid X=x_i \right).$
In particular, $\pi_1^s=\pi_b$, whereas increasing $N$ progressively concentrates the policy on actions receiving high scores under $s$.

\textbf{Policy evaluation.} Our paper primarily focuses on the \textit{evaluation} of the BoN policy $\pi_N^s$. For example, we might be interested in the performance of $\pi_N^s$ for a specific $N$, or whether $\pi_N^s$ for $N>1$ improves over the base model $\pi_b$. A principled way to evaluate a policy $\pi$ is via its \textit{policy value} $V(\pi)
= \E\!\Big[ \sum_{a \in \mathcal{A}} \pi(a\mid X)\mu(X,a) \Big]
$,
which quantifies the expected reward under policy $\pi$ \citep{Kallus.2018}.

\textbf{The choice of $N$.} Evaluating $V(\pi_N^s)$ is directly tied to the choice of $N$, which controls how far $\pi_N^s$ departs from $\pi_b$: a small $N$ yields only marginal gains, while a large $N$ concentrates the policy on high-scoring actions that are rare under $\pi_b$ and thus weakly represented in $\mathcal D$. As a result, errors in the score function can be amplified, which can lead to reward overoptimization \citep{gao2023scaling}.

In this paper, we address two core challenges:
\begin{tcolorbox}[
  colback=takeawayback,
  colframe=takeawayframe,
  boxrule=0.6pt,
  arc=1pt,
  left=5pt,right=5pt,top=4pt,bottom=4pt,
  boxsep=0pt,
  before skip=5pt,
  after skip=5pt
]
\circledblue{1}~\textbf{Off-policy evaluation} ($\rightarrow$ Section~\ref{sec:ope}): How can we efficiently estimate the value $V(\pi_N^s)$ of a BoN policy from data $\mathcal D$ generated by $\pi_b$ when the reference model can only be sampled?

\circledblue{2}~\textbf{Selecting $N$} ($\rightarrow$ Section~\ref{sec:select_N}): How can we use these estimates to select the sampling budget $N$ with statistical guarantees?
\end{tcolorbox}

\section{Efficient Off-Policy Evaluation for BoN}
\label{sec:ope}

In this section, we develop an estimator for the value of the BoN policy $V(\pi_N^s)$. A natural starting point is the doubly robust (DR) estimator, which is widely used for off-policy evaluation due to its favorable statistical properties \citep{Robins.1994b, Dudik.2011}. For a fixed BoN policy $\pi_N^s$, the DR estimator is
\vspace{-0.15cm}
\begin{equation}\label{eq:dr-bon}
\hat V_{\mathrm{DR}}(\pi_N^s) = \frac{1}{n}\sum_{i=1}^n \psi_N(z_i), \psi_N(z_i) = \sum_{a\in\mathcal A} \textcolor{termblue}{\pi_N^s(a\mid x_i)\hat\mu(x_i,a)} + \textcolor{termpurple}{\frac{\pi_N^s(a_i\mid x_i)} {\pi_b(a_i\mid x_i)} \bigl(y_i-\hat\mu(x_i,a_i)\bigr)},
\vspace{-0.2cm}
\end{equation}
where $z_i=(x_i,a_i,y_i)$ and $\hat\mu$ is an estimator of the reward function $\mu$.
The \textcolor{termblue}{plug-in} term (first term in the sum) evaluates the target policy under the estimated reward function, while the \textcolor{termpurple}{correction term} (second term) uses the observed outcome to correct for errors in the estimated reward function, weighted by the density ratio between the BoN and behavior policies.

\textit{Why the standard DR estimator is not applicable.} Direct application of the DR estimator in the form of \eqref{eq:dr-bon} is \textit{not} possible in the inference-time alignment setting due to two main challenges: (i)~both terms require knowledge of $\pi_N^s(a\mid x_i)$, but the induced BoN probabilities are not available from a black-box sampler. Attempting to estimate $\pi_N^s(a\mid x_i)$ from the data is extremely challenging in the enormous and sparse text action space $\mathcal{A}$. (ii)~The correction term requires the density ratio $w_N(x,a) := \frac{\pi_N^s(a\mid x)}{\pi_b(a\mid x)}$.
Because the action space $\mathcal A$ consists of high-dimensional text sequences, individual sequence probabilities can be extremely small, and density-ratio estimation can suffer from high variance and numerical instability.

To address this, we first establish an alternative representation of the DR score for the inference-time alignment setting (Section~\ref{sec:dr_score}). We then construct a pooled estimator that reuses auxiliary samples across candidate budgets (Section~\ref{sec:bon_dr_estimator}) and establish theoretical guarantees (Section~\ref{sec:theory}).

\subsection{Inference-time DR score}
\label{sec:dr_score}

Our key observation is that the target policy is not arbitrary: $\pi_N^s$ is obtained by applying a known Best-of-$N$ selection rule to samples from $\pi_b$. We can therefore exploit two pieces of structure jointly: (i)~although $\pi_b$ cannot be evaluated, we can sample from $\pi_b(\cdot\mid x)$ at inference time, and (ii)~the construction of the BoN policy imposes a particular relationship between $\pi_N^s$ and $\pi_b$.

In the following, we show that DR score admits a specific representation only in terms of one-dimensional score ranks under the behavior policy.
For fixed $x$ and action $a$, we define the strict and weak cumulative density functions (CDFs) of the score via \vspace{-0.1cm}
\begin{equation}
F_x^-(a)
=
\prob_{A\sim \pi_b(\cdot\mid x)}
\left(
s(x,A)<s(x,a)
\right) \text{ and } F_x(a)
=
\prob_{A\sim \pi_b(\cdot\mid x)}
\left(
s(x,A)\leq s(x,a)
\right).
\vspace{-0.3cm}
\end{equation}

Thus, $F_x^-(a)$ and $F_x(a)$ describe the rank of $a$ according to the score $s$ among actions sampled from the behavior policy. Importantly, both quantities are probabilities of simple comparison events and can therefore be estimated using samples from $\pi_b(\cdot\mid x)$ without evaluating any sequence likelihoods. The following proposition shows that these rank probabilities are sufficient to express the entire DR score for a BoN policy (which we call BoN-DR score).

\begin{proposition}[BoN-DR score]
\label{prop:bon-dr-score}
Suppose that distinct actions have distinct scores, i.e., \vspace{-0.1cm}
$s(x,a)=s(x,a')
\quad\Longrightarrow\quad
a=a',$
for any $a,a' \in \mathcal A$. Define $g_N(u,v) =
\sum_{\ell=0}^{N-1} u^{N-1-\ell}v^\ell$.
Then, for $z=(x,a,y)$ with $\pi_b(a\mid x)>0$, the DR score for the BoN policy can be written as \begin{equation}\label{eq:bon-dr-score}
\psi_N(z)
=
\E_{A' \sim \pi_b(\cdot\mid x)}
\left[
g_N\!\left(F_x(A'),F_x^-(A')\right)
\hat\mu(x,A')
\right] +
g_N\!\left(F_x(a),F_x^-(a)\right)
\bigl(y-\hat\mu(x,a)\bigr).
\end{equation}
\end{proposition}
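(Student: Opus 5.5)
The plan is to compute the BoN probability $\pi_N^s(a\mid x)$ in closed form in terms of $\pi_b(a\mid x)$, $F_x(a)$ and $F_x^-(a)$. Substituting this into both terms of the DR score \eqref{eq:dr-bon} should then give \eqref{eq:bon-dr-score}.

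\textbf{Step 1: the BoN probability.} Fix $x$ and an action $a$ with $\pi_b(a\mid x)>0$. Because distinct actions have distinct scores, the event $s(x,A)=s(x,a)$ coincides with $A=a$. Hence $F_x(a)-F_x^-(a)=\pi_b(a\mid x)$. Scores can still tie when the same action is sampled more than once, so I fix any selection rule among maximizers; it necessarily returns $a$ in that case. The selected action $a^{\star,s}_N(x)$ equals $a$ exactly when every one of the $N$ i.i.d.\ draws scores at most $s(x,a)$ and at least one draw equals $a$. This gives
\[
\pi_N^s(a\mid x)=F_x(a)^N-F_x^-(a)^N .
\]

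\textbf{Step 2: factor out $\pi_b$.} The identity $u^N-v^N=(u-v)\sum_{\ell=0}^{N-1}u^{N-1-\ell}v^\ell=(u-v)\,g_N(u,v)$, applied with $u=F_x(a)$ and $v=F_x^-(a)$, yields
\[
\pi_N^s(a\mid x)=\pi_b(a\mid x)\,g_N\!\left(F_x(a),F_x^-(a)\right).
\]
Thus $w_N(x,a)=g_N(F_x(a),F_x^-(a))$ whenever $\pi_b(a\mid x)>0$. This handles the correction term directly. If $\pi_b(a\mid x)=0$, no draw can equal $a$, so $\pi_N^s(a\mid x)=0$ as well, and such actions drop out of the plug-in sum.

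\textbf{Step 3: the plug-in term.} I rewrite it as
\[
\sum_a \pi_N^s(a\mid x)\hat\mu(x,a)=\sum_{a:\pi_b(a\mid x)>0}\pi_b(a\mid x)\,g_N(F_x(a),F_x^-(a))\,\hat\mu(x,a),
\]
which is $\E_{A'\sim\pi_b(\cdot\mid x)}[g_N(F_x(A'),F_x^-(A'))\hat\mu(x,A')]$. For a general (non-countable) action space, I would replace the sums with integrals against a dominating measure and read $\pi_b$ as a density. Step 1 then needs the point mass $\pi_b(a\mid x)$ to be interpreted through the Radon–Nikodym derivative of the law of the selected action. I would prove this by computing $\prob(a^\star_N\in B)$ via the law of the maximum of the score.

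\textbf{Main obstacle.} The only delicate step is Step 1, namely the tie and repeated-draw bookkeeping. The distinct-score assumption is exactly what forces ties to occur only at identical actions. That is what makes $F_x-F_x^-$ equal to the point mass and makes the event ``max score $=s(x,a)$'' equal to ``$a$ is selected''. Once that step is established, the rest is algebra.
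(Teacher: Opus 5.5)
Your proof is correct, and it ends the same way as the paper's: you obtain the closed form $w_N^s(x,a)=g_N(F_x(a),F_x^-(a))$, apply the identity $u^N-v^N=(u-v)g_N(u,v)$, and substitute into both terms of \eqref{eq:dr-bon}. The difference is how you compute the BoN probability.

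You use the distinct-score hypothesis to identify $F_x(a)-F_x^-(a)$ with the point mass $\pi_b(a\mid x)$. You then read off $\pi_N^s(a\mid x)=F_x(a)^N-F_x^-(a)^N$ directly from the law of the maximum score. The tie-breaking rule never matters in your argument.

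The paper instead conditions on a fixed candidate position holding $a$ and counts the $t$ other candidates with equal score. It gives that occurrence selection probability $1/(t+1)$ under uniform tie-breaking among maximizing occurrences. By exchangeability and a binomial identity, with $q=F-F^-$, it gets $w_N^s=N\sum_{t}\binom{N-1}{t}q^t(F^-)^{N-1-t}/(t+1)=(F^N-(F^-)^N)/(F-F^-)$.

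\textbf{What each route buys.} Your route is shorter and uses the hypothesis exactly where it is needed. The paper's computation never actually uses it; it only needs $q\ge\pi_b(a\mid x)>0$. So it stays valid when distinct actions share a score, provided ties are broken uniformly. In that case your identity $\pi_N^s(a\mid x)=F^N-(F^-)^N$ would fail: it becomes $\frac{\pi_b(a\mid x)}{q}\bigl(F^N-(F^-)^N\bigr)$, although the ratio $g_N(F,F^-)$ is unchanged.

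The paper's per-occurrence $1/(t+1)$ bookkeeping is also exactly what the pooled weight estimator \eqref{eq:weight-pooled} and the kernel $Q_J$ in Lemma~\ref{lem:pooled-unbiased} reuse. Its derivation therefore doubles as the unbiasedness argument for the estimator. Your remark on non-countable action spaces is unnecessary, since the paper works with finite or countable $\mathcal A$.
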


\begin{proof}
    This follows by combining the BoN order-statistic identity in \citet{beirami2025guarantees} with the DR score formula in Eq.~\eqref{eq:dr-bon}. We refer to Appendix~\ref{app:proofs} for details.
\vspace{-0.2cm}
\end{proof}

Proposition~\ref{prop:bon-dr-score} resolves the two computational obstacles that we identified above: (i)~it eliminates the need to evaluate either $\pi_b(a\mid x)$ or $\pi_N^s(a\mid x)$; and (ii) the density ratio is replaced by $g_N(F_x(a),F_x^-(a))$, which depends only on the relative rank of $a$ under the score distribution induced by the behavior policy. Consequently, apart from evaluating the score function $s$ and outcome model $\hat\mu$, every quantity in Eq.~\eqref{eq:bon-dr-score} can be approximated using only samples from the behavior policy. In particular, $F_x(a)$ and $F_x^-(a)$ are expectations of bounded indicator variables, e.g., $F_x(a) = \E_{A' \sim \pi_b(\cdot\mid x)} \left[ \1\{s(x,A')\leq s(x,a)\} \right]$, thus turning the original high-dimensional density-ratio evaluation problem into the estimation of simple rank probabilities.

The remaining challenge is therefore how we should estimate the expectations and score-rank functionals in Eq.~\eqref{eq:bon-dr-score} given a finite number of auxiliary samples from $\pi_b(\cdot\mid x)$. In the next section, we show that their particular polynomial structure admits natural unbiased U-statistic estimators.

\subsection{Pooled BoN-DR estimator}
\label{sec:bon_dr_estimator}

We now construct an estimator of the DR score in Eq.~\eqref{eq:dr-bon} using a finite, reusable pool of samples from the behavior policy. Proposition~\ref{prop:bon-dr-score} shows that the DR score can be estimated using only samples from the behavior policy. A direct Monte Carlo implementation of Eq.~\eqref{eq:bon-dr-score} could draw fresh samples from the behavior policy each time an expectation or score rank needs to be estimated, and repeat this procedure separately for every candidate value of $N$. For generative models, however, sampling may be computationally costly.

Instead, we propose to draw a single reusable pool of samples for each context and use it to estimate both components of the DR score (i.e., the \textcolor{termblue}{plug-in term} and the \textcolor{termpurple}{correction term}), as well as the values of other BoN policies with different $N$. For each context $x_i$, we draw one pool of behavior-policy samples $\tilde a_i = (\tilde a_{i1},\dots,\tilde a_{iM})
\overset{\textup{iid}}{\sim}
\pi_b(\cdot\mid x_i)$,
where $M$ is a user-chosen computational budget, and let us denote $\bar a_i = (a_i, \tilde a_{i1},\dots,\tilde a_{iM}) \in \mathcal{A}^{M+1}$ the vector of Monte Carlo samples and behavior actions.

To derive our BoN-DR estimator, we leverage \textit{U-statistics} \citep{vanderVaart.1998}, which estimate a quantity defined as an expectation over $r$ i.i.d.\ draws via a single larger pool of $M$ draws. The basic idea is to estimate the quantity by averaging the same quantity over all size-$r$ subsets of the pool. Each subset has the correct distribution, so the resulting estimator is unbiased while reusing all available samples. Below, we apply this construction separately to (i)~the \textcolor{termblue}{plug-in} term and (ii)~\textcolor{termpurple}{correction term} of the DR score and then combine the two into our pooled BoN-DR estimator.

\textbf{Estimator for the \textcolor{termblue}{plug-in term}.}
By Proposition~\ref{prop:bon-dr-score}, the plug-in term can be written as
\begin{equation}
v_N^s(x_i)
=
\E_{A\sim\pi_N^s(\cdot\mid x_i)}
\left[
\hat\mu(x_i,A)
\right]
=
\E\left[
\hat\mu(x_i,A_N^{\star,s})
\mid X_i=x_i
\right],
\label{eq:plugin-bon-representation}
\end{equation}
where $A_N^{\star,s}$ is the action selected from $N$ i.i.d.\ draws from
$\pi_b(\cdot\mid x_i)$ according to the score $s$. Since $\bar a_i$ contains $M+1$ such draws, we estimate $v_{i,N}^s$ by averaging the fitted reward of the BoN winner over all size-$N$ subsets: \vspace{-0.1cm}
\begin{equation}
\hat v_{i,N}^{s,M+1} = \binom{M+1}{N}^{-1} \sum_{\substack{ J\subseteq\{0,\ldots,M\},  |J|=N }} \hat\mu\left( x_i,\bar a_{i,J}^{\star,s} \right). \vspace{-0.1cm}
\label{eq:plugin-u-stat} \end{equation}
This estimator can be evaluated without enumerating all subsets. Let $\mathcal G_i$ index the unique actions appearing in $\bar a_i$, and let $\bar a_{ig}$ denote the corresponding action. We define \vspace{-0.3cm}
\begin{equation}
L_{ig} = \sum_{m=0}^{M} \1\left\{ s(x_i,\bar a_{im}) < s(x_i,\bar a_{ig}) \right\} \text{ and } E_{ig} = \sum_{m=0}^{M} \1\left\{ s(x_i,\bar a_{im}) \leq s(x_i,\bar a_{ig}) \right\}.
\end{equation}
Then, an alternative estimator is \vspace{-0.3cm}
{\begin{equation}
\textcolor{termblue}{\hat v_{i,N}^{s,M+1}(x_i, \bar a_i)} = \sum_{g\in\mathcal G_i} \hat\mu(x_i,\bar a_{ig}) \frac{ \binom{E_{ig}}{N} - \binom{L_{ig}}{N} }{ \binom{M+1}{N} }. \label{eq:plugin-pooled}
\vspace{-0.1cm}
\end{equation} }
Indeed, the numerator counts exactly the size-$N$ subsets for which $\bar a_{ig}$ is the highest-scoring action. We use the convention $\binom{u}{N}=0$ for $u<N$.

\textbf{Estimator for the \textcolor{termpurple}{correction term}.} For the correction term, we treat the logged action $a_i$ as fixed and use the $M$ auxiliary samples to estimate $w_N^s(x_i,a_i) = g_N(F_{x_i}(a_i),F_{x_i}^-(a_i))$. We define
\vspace{-0.1cm}
\begin{equation} L_i = \sum\nolimits_{m=1}^{M} \1\left\{ s(x_i,\tilde a_{im}) < s(x_i,a_i) \right\}, \text{ and } E_i = \sum\nolimits_{m=1}^{M} \1\left\{ s(x_i,\tilde a_{im}) \leq s(x_i,a_i) \right\}.
\vspace{-0.1cm}
\end{equation}
Under a no-score-ties assumption, $E_i-L_i$ counts the number of auxiliary draws equal to $a_i$. By averaging over all size-$(N-1)$ subsets of the auxiliary pool, we yield the estimator \vspace{-0.1cm}
\begin{equation}
\textcolor{termpurple}{\hat w_{i,N}^{s,M}(x_i, \bar a_i)} = N \sum_{t=0}^{\min\{E_i-L_i,N-1\}} \frac{1}{t+1} \frac{ \binom{E_i-L_i}{t} \binom{L_i}{N-1-t} }{ \binom{M}{N-1} }.
\vspace{-0.1cm}
\label{eq:weight-pooled}
\end{equation}
Here, $t$ denotes the number of additional copies of $a_i$ among the $N-1$ sampled competitors, while the factor $1/(t+1)$ accounts for the $t+1$ identical occurrences of the winning action.

\begin{tcolorbox}[
  colback=takeawayback,
  colframe=takeawayframe,
  boxrule=0.6pt,
  arc=1pt,
  left=5pt,right=5pt,top=4pt,bottom=4pt,
  boxsep=0pt,
  before skip=5pt,
  after skip=5pt
]
\textbf{Pooled BoN-DR estimator.} We define our \textbf{pooled BoN-DR estimator} as
\begin{align}
\psi_N^M(z_i,\tilde a_i) &= \textcolor{termblue}{\hat v_{i,N}^{s,M+1}(x_i, \bar a_i)} + \textcolor{termpurple}{\hat w_{i,N}^{s,M}(x_i, \bar a_i) \left( y_i-\hat\mu(x_i,a_i) \right)}, \\
\hat V_{\mathrm{DR}}^M(\pi_N^s) &= \frac{1}{n} \sum_{i=1}^n \psi_N^M(z_i,\tilde a_i).
\vspace{-0.2cm}
\label{eq:pooled-dr-estimator}
\end{align}
Thus, a single pool of $M$ additional generations per context is sufficient to estimate both components of the DR score. Moreover, the same pool can be reused across different values of $N$, which is particularly important for the policy-selection problem considered in the following section.
\end{tcolorbox}

\subsection{Theoretical guarantees}
\label{sec:theory}

We now establish asymptotic guarantees for our pooled BoN-DR estimator, including valid inference and efficiency. For this, we assume that $s$ and $\hat\mu$ are constructed independently of $\mathcal D$ (e.g., external models or a sample split), together with standard sampling and moment conditions (Assumption~\ref{ass:regularity}).

\begin{theorem}[Validity and efficiency of the pooled BoN-DR estimator]
\label{thm:dr-asymptotics}
Fix $N$ and condition on the training sample used to construct $s$ and
$\hat\mu$.

\textit{\textbf{(a)} Finite-$M$ validity.}
For any fixed $M\geq N-1$,
$
\sqrt n
\left(
\hat V_{\mathrm{DR}}^M(\pi_N^s)
-
V(\pi_N^s)
\right)
\rightsquigarrow
\mathcal N(0,\sigma_{N,M}^2)
$,
where
$\sigma_{N,M}^2
=
\mathrm{Var}\left(
\psi_N^M(Z,\tilde A)
\right)$.
This result does not require $\hat\mu$ to be a consistent estimator of
$\mu$.

\textit{\textbf{(b)} Infinite-$M$ efficiency.}
Suppose additionally that $M=M_n\to\infty$ and that the reward function estimator $\hat{\mu}$
is consistent for the true reward function $\mu$ in $L_2$. Then
$\sqrt n
\left(
\hat V_{\mathrm{DR}}^{M_n}(\pi_N^s)
-
V(\pi_N^s)
\right)
\rightsquigarrow
\mathcal N(0,\sigma_{N,\mathrm{eff}}^2)$,
where $\sigma_{N,\mathrm{eff}}^2$ is the variance of the efficient influence
function
$\phi_N^{\mathrm{eff}}(Z)
=
\sum_{a\in\mathcal A}
\pi_N^s(a\mid X)\mu(X,a)
+
w_N^s(X,A)
\bigl(Y-\mu(X,A)\bigr)
-
V(\pi_N^s)$.

Thus, conditional on the learned policy and behavior sampler, the proposed estimator attains the same first-order efficiency bound as an oracle DR estimator with access to the exact BoN density ratio.  See Appendix~\ref{app:proofs} for the proof.
\end{theorem}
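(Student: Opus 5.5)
Conditional on the training sample, $s$ and $\hat\mu$ are fixed functions, and the tuples $(Z_i,\tilde A_i)$ are i.i.d. The plan is therefore to reduce both parts to a classical CLT for an i.i.d. mean, with an unbiasedness identity doing the main work.

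\textbf{Part (a): unbiasedness of the pooled score.} The key step is to show $\E[\psi_N^M(Z,\tilde A)]=V(\pi_N^s)$ for every fixed $M\geq N-1$. We split into the two terms.

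For the plug-in term, given $X=x$ the vector $\bar A=(A,\tilde A_1,\dots,\tilde A_M)$ consists of $M+1$ i.i.d. draws from $\pi_b(\cdot\mid x)$. The logged action $A$ is itself drawn from $\pi_b$, and the pool is drawn independently. Every size-$N$ subset is therefore an i.i.d. $N$-sample, so by Eq.~\eqref{eq:plugin-u-stat} and Eq.~\eqref{eq:plugin-bon-representation} we get $\E[\hat v^{s,M+1}_{N}\mid X=x]=v_N^s(x)$. The counting identity Eq.~\eqref{eq:plugin-pooled} is used only to show that the two forms agree.

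For the correction term, condition on $(X,A)=(x,a)$. Write $\hat w^{s,M}_N$ as the average over size-$(N-1)$ subsets $S$ of the auxiliary pool of
\[
h(a,S)=N\,\frac{\1\{a \text{ is a maximizer over } S\cup\{a\}\}}{1+\#\{\text{copies of } a \text{ in } S\}}.
\]
Each $S$ is an i.i.d. $(N-1)$-sample, so we expand $\E[h(a,S)]$ over $t$, the number of ties with $a$. Using $P(s(x,A')=s(x,a))=F_x(a)-F_x^-(a)$, the sum becomes
\[
N\sum_t \binom{N-1}{t}\frac{(F-F^-)^t (F^-)^{N-1-t}}{t+1}=\frac{F^N-(F^-)^N}{F-F^-}=g_N(F_x(a),F_x^-(a)).
\]
This is the weight in Proposition~\ref{prop:bon-dr-score}. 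Since the pool is independent of $Y$ given $(X,A)$, the correction term has conditional mean $w_N^s(x,a)\bigl(\mu(x,a)-\hat\mu(x,a)\bigr)$.

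Adding the two terms and taking expectations gives the mean of the exact DR score $\psi_N$. This mean equals $V(\pi_N^s)$ for any $\hat\mu$, because $\E[w_N^s(X,A)f(X,A)]=\E[\sum_a\pi_N^s(a\mid X)f(X,a)]$. The finite second moment (Assumption~\ref{ass:regularity}; the weight is bounded by $N$) then lets the Lindeberg--L\'evy CLT give (a), and no consistency of $\hat\mu$ is needed anywhere.

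\textbf{Part (b): efficiency.} Now $M_n\to\infty$, so we have a triangular array and use Lindeberg--Feller. Write
\[
\psi_N^{M_n}-V=\phi_N^{\mathrm{eff}}+R_n, \qquad R_n=(\psi_N^{M_n}-\psi_N)+(\psi_N-V-\phi_N^{\mathrm{eff}}).
\]
By part (a), each summand of $R_n$ has mean zero, so it suffices to show $\E[R_n^2]\to0$; then $\sqrt n$ times the average of the $R_n$ terms is $o_P(1)$ by Chebyshev.

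\emph{First piece.} Conditional on $(Z,X)$, the U-statistics converge in $L_2$ at rate $O(1/M)$, by Hoeffding's variance bound with bounded kernels. The residual $y-\hat\mu$ is square integrable, so $\E[(\psi_N^{M_n}-\psi_N)^2]\to0$.

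\emph{Second piece.} This equals $\sum_a\pi_N^s(a\mid X)(\hat\mu-\mu)(X,a)-w_N^s(X,A)(\hat\mu-\mu)(X,A)$. Since $w_N^s\le N$ and $\pi_N^s\le N\pi_b$, its second moment is bounded by a multiple of $\|\hat\mu-\mu\|_{L_2}^2\to0$.

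The CLT for $\phi_N^{\mathrm{eff}}$ then gives (b).

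\textbf{Main obstacle.} The delicate step is the tie-handling identity in the correction term: the $1/(t+1)$ factor in Eq.~\eqref{eq:weight-pooled} must produce exactly $g_N$. This needs the algebraic identity $\frac{1}{t+1}\binom{N-1}{t}=\frac{1}{N}\binom{N}{t+1}$, followed by the binomial theorem applied to $\bigl((F-F^-)+F^-\bigr)^N-(F^-)^N$.

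A second, minor point is the uniform $O(1/M)$ variance bound in part (b). For the plug-in term the kernel $\hat\mu$ must be bounded or square integrable, which I would take from Assumption~\ref{ass:regularity}.
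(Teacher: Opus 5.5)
Your proposal is correct and follows essentially the same route as the paper. Part (a) is finite-$M$ unbiasedness of the two pooled U-statistics, where the tie-aware kernel and the identity $\tfrac{N}{t+1}\binom{N-1}{t}=\binom{N}{t+1}$ produce $g_N$, followed by the ordinary CLT. Part (b) is a mean-zero remainder split into an $O(1/M)$ Monte Carlo piece and a nuisance piece of order $(\sqrt N+N)\|\hat\mu-\mu\|_2$, which vanishes by Chebyshev.

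Two small touch-ups. First, analyse the plug-in U-statistic conditional on $X$ only, not on $Z$: the logged action sits inside the pool, so conditioning on $Z$ breaks the i.i.d. structure that Hoeffding's bound needs. Second, the paper adds a brief standard AIPW argument that $\phi_N^{\mathrm{eff}}$ really is the efficient influence function when $\pi_b$ and $\pi_N^s$ are held fixed; your proposal omits this step.
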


Part~\textbf{(a)} is particularly important for inference-time alignment. Valid inference does \textit{not} require the reward estimator to approximate the true conditional mean. For example, $s$ and $\hat\mu$ may be given by an LLM judge that is useful for ranking responses but is \textit{not} a statistically consistent model of $\E[Y\mid X,A]$. The DR correction still yields a consistent, root-$n$ estimator of the value of the resulting BoN policy. Part~\textbf{(b)} adds that, if $\hat\mu$ is additionally consistent, the BoN-DR estimator attains the corresponding first-order efficiency bound.

We next clarify how the auxiliary sampling budget $M$ affects the estimator. Because the pooled estimators are unbiased for every finite $M$, increasing $M$ removes Monte Carlo variance rather than a first-order bias. For fixed $N$, under the boundedness conditions above, it is sufficient that $M_n\to\infty$; no particular rate of growth relative to $n$ is required.

\section{Selecting the optimal sampling budget $N$ for BoN}
\label{sec:select_N}

Our goal is now to select the optimal sampling budget $N_j$ from a candidate grid $\mathcal{N}=\{N_1,\dots,N_J\}$. In the following, we introduce two selection rules: \textbf{(i)}~\textbf{Value Maximization}, which selects the budget with the largest estimated policy value, and \textbf{(ii)}~\textbf{Safe Improvement}, which selects the budget with the largest lower confidence bound (LCB) on improvement over the reference policy and falls back to $N=1$ when no positive improvement is supported by the data. The latter thus guarantees improvement over the base model with high probability.

\subsection{Value Maximization}

\begin{tcolorbox}[
  colback=takeawayback,
  colframe=takeawayframe,
  boxrule=0.6pt,
  arc=1pt,
  left=5pt,right=5pt,top=4pt,bottom=4pt,
  boxsep=0pt,
  before skip=5pt,
  after skip=5pt
]
\textit{Value Maximization} selects the budget with the largest estimated policy value: \vspace{-0.1cm}
\begin{equation}
\hat N_{\mathrm{value}}
\in
\arg\max_{N\in\mathcal N}
\hat V_{\mathrm{DR}}^M(\pi_N^s).
\vspace{-0.1cm}
\end{equation}
\end{tcolorbox}

The following corollary shows that, over a fixed candidate grid, this rule asymptotically recovers the best policy value in the class.

\begin{corollary}[Oracle consistency of Value Maximization]
\label{thm:value-maximization}
Suppose that the conditions of Theorem~\ref{thm:dr-asymptotics}(a)
hold uniformly over the finite candidate grid $\mathcal N$.
Then, conditional on the training sample,
$V(\pi_{\hat N_{\mathrm{value}}}^s)
\overset{p}{\longrightarrow}
\max_{N\in\mathcal N}V(\pi_N^s)$. The proof is in Appendix~\ref{app:proofs}.
\end{corollary}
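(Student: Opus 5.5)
The argument reduces to uniform consistency of the value estimates over the finite grid, followed by a standard argmax comparison. Applying Theorem~\ref{thm:dr-asymptotics}(a) to each $N\in\mathcal N$ (its conditions hold uniformly by assumption) gives $\sqrt n\bigl(\hat V_{\mathrm{DR}}^M(\pi_N^s)-V(\pi_N^s)\bigr)=O_p(1)$. Hence $\hat V_{\mathrm{DR}}^M(\pi_N^s)-V(\pi_N^s)=o_p(1)$ for each $N$. Since $\mathcal N$ is finite with $J$ elements, a union bound yields
\[
\Delta_n:=\max_{N\in\mathcal N}\bigl|\hat V_{\mathrm{DR}}^M(\pi_N^s)-V(\pi_N^s)\bigr| = o_p(1),
\]
because $\prob(\Delta_n>\epsilon)\le\sum_{j=1}^J\prob\bigl(|\hat V_{\mathrm{DR}}^M(\pi_{N_j}^s)-V(\pi_{N_j}^s)|>\epsilon\bigr)\to 0$. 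We work throughout conditional on the training sample, so $s$, $\hat\mu$, and hence each $V(\pi_N^s)$, are fixed constants.

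Next, let $N^\star\in\arg\max_{N\in\mathcal N}V(\pi_N^s)$. The selected budget maximizes the estimated values, so
\[
0\le V(\pi_{N^\star}^s)-V(\pi_{\hat N_{\mathrm{value}}}^s)
\le \bigl[V(\pi_{N^\star}^s)-\hat V_{\mathrm{DR}}^M(\pi_{N^\star}^s)\bigr]+\bigl[\hat V_{\mathrm{DR}}^M(\pi_{\hat N_{\mathrm{value}}}^s)-V(\pi_{\hat N_{\mathrm{value}}}^s)\bigr]\le 2\Delta_n .
\]
The middle inequality uses $\hat V_{\mathrm{DR}}^M(\pi_{\hat N_{\mathrm{value}}}^s)\ge \hat V_{\mathrm{DR}}^M(\pi_{N^\star}^s)$. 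The difference is therefore at most $2\Delta_n=o_p(1)$, which gives the claim. The bound holds whatever tie-breaking rule is used in the argmax.

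The only potential subtlety is that $\hat N_{\mathrm{value}}$ is data-dependent. Evaluating the estimation error at a random index requires the uniform bound $\Delta_n$ rather than pointwise convergence, and finiteness of the grid supplies exactly this. No joint asymptotic normality across $N$ is needed, only marginal tightness for each $N$. Even if the optimum is not unique, so that $\hat N_{\mathrm{value}}$ need not converge, the selected policy's value still converges to the optimal value.
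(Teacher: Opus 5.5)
Your proof is correct and follows essentially the same route as the paper. Both obtain uniform consistency $\max_{N\in\mathcal N}\bigl|\hat V_{\mathrm{DR}}^M(\pi_N^s)-V(\pi_N^s)\bigr|=o_p(1)$ from Theorem~\ref{thm:dr-asymptotics}(a) and finiteness of the grid, and then apply the argmax sandwich $0\le\max_{N\in\mathcal N}V(\pi_N^s)-V(\pi_{\hat N_{\mathrm{value}}}^s)\le 2\Delta_n$. Your explicit union bound and the remark that only marginal tightness is needed spell out details the paper leaves implicit.
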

\vspace{-0.2cm}
Importantly, this conclusion inherits the robustness of the BoN-DR estimator: it does \textit{not} require $\hat\mu$ to consistently estimate the true reward function. However, the result is an \textit{oracle optimality} guarantee, and, in finite samples, estimation noise can still cause $\hat N_{\mathrm{value}}$ to select a policy for which the value is below $V(\pi_b)$. This motivates our selection rule for safe improvement next.

\subsection{Safe Improvement}

To obtain a no-harm guarantee relative to the behavior policy, we now formulate the selection rule directly in terms of improvement over the behavior policy. To do so, we define \vspace{-0.1cm}
\begin{equation}
\Delta_N
:=
V(\pi_N^s)-V(\pi_1^s)
=
V(\pi_N^s)-V(\pi_b)
\vspace{-0.1cm}
\end{equation}
for each $N\in\mathcal N\setminus\{1\}$. \textit{Safe Improvement} then selects a candidate $N>1$ only when its simultaneous lower confidence bound for $\Delta_N$ is positive; otherwise, we fall back to $N=1$. We use simultaneous rather than pointwise bounds because $N$ is selected adaptively from the same candidate grid.

$\bullet$\,To construct the LCB, we estimate each candidate policy's improvement over the reference policy and its sampling variability. To do so, for each $N\in\mathcal N\setminus\{1\}$, we define the doubly robust improvement score
$D_{i,N}^M
:=
\psi_N^M(z_i,\tilde a_i)
-
\psi_1^M(z_i,\tilde a_i)$,
and the corresponding estimator
$\hat\Delta_N^M
:=
\frac{1}{n}
\sum_{i=1}^n
D_{i,N}^M
=
\hat V_{\mathrm{DR}}^M(\pi_N^s)
-
\hat V_{\mathrm{DR}}^M(\pi_1^s)$.
The corresponding variance is estimated by
$\hat\tau_{N,M}^2
:=
\frac{1}{n}
\sum_{i=1}^n
\big(
D_{i,N}^M-\hat\Delta_N^M
\big)^2$.

$\bullet$\,To obtain \textit{simultaneous} LCBs across candidate budgets, we need to account for the dependence induced by reusing the same evaluation data and auxiliary samples pools across all $N\in\mathcal N$. We do so by using a joint multiplier bootstrap~\citep{chernozhukov2013gaussian}. Conditional on the sample, we thus draw independent multipliers $\xi_i$ satisfying $\E[\xi_i]=0$ and $\E[\xi_i^2]=1$, such as standard Gaussian variables. Formally, we define \vspace{-0.4cm}
{\small\begin{equation}
T_\Delta^{*,+}
:=
\max_{N\in\mathcal N\setminus\{1\}}
\frac{
 n^{-1/2}
 \sum_{i=1}^n
 \xi_i\left(D_{i,N}^M-\hat\Delta_N^M\right)
}{
\hat\tau_{N,M}
}.
\vspace{-0.2cm}
\end{equation}}
Let $\hat c_{1-\alpha}^{\Delta,+}$ denote the conditional $(1-\alpha)$-quantile of $T_\Delta^{*,+}$. Thus, simultaneous LCB is given by \vspace{-0.2cm}
\begin{equation}
\widehat{\mathrm{LCB}}_\Delta(N)
:=
\hat\Delta_N^M
-
\hat c_{1-\alpha}^{\Delta,+}
\frac{\hat\tau_{N,M}}{\sqrt{n}},
\qquad
N\in\mathcal N\setminus\{1\}.
\vspace{-0.3cm}
\end{equation}

\begin{tcolorbox}[
  colback=takeawayback,
  colframe=takeawayframe,
  boxrule=0.6pt,
  arc=1pt,
  left=5pt,right=5pt,top=4pt,bottom=4pt,
  boxsep=0pt,
  before skip=5pt,
  after skip=5pt
]
\textit{Safe Improvement} selects the candidate with the strongest LCB on improvement over the reference policy and falls back to $N=1$ when no candidate has a positive LCB: \vspace{-0.2cm}
\begin{equation}
\hat N_{\mathrm{LCB}}
:=
\begin{cases}
\displaystyle
\arg\max_{N\in\mathcal N\setminus\{1\}}
\widehat{\mathrm{LCB}}_\Delta(N),
&
\displaystyle
\max_{N\in\mathcal N\setminus\{1\}}
\widehat{\mathrm{LCB}}_\Delta(N)>0,
\\[0.7em]
1,
&
\text{otherwise}.
\end{cases}
\end{equation}
\end{tcolorbox}

\begin{theorem}[Safe Improvement (``no-harm guarantee'')]
\label{thm:safe-improvement}
Suppose that the conditions of Theorem~\ref{thm:dr-asymptotics}(a) hold uniformly over the finite candidate grid $\mathcal N$ and that the variances $\tau_{N,M}^2$ are uniformly bounded away from zero.
Conditional on the training sample, it follows that
$V(\pi_{\hat N_{\mathrm{LCB}}}^s)
\overset{p}{\longrightarrow}
\max_{N\in\mathcal N}V(\pi_N^s) $ and
$\prob\left(
V(\pi_{\hat N_{\mathrm{LCB}}}^s)
\geq
V(\pi_b)
\right)
\geq
1-\alpha-o(1)$. Proof in Appendix~\ref{app:proofs}.
\end{theorem}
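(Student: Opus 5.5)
The plan is to reduce both claims to a joint central limit theorem for the finite vector of improvement estimators, plus consistency of the bootstrap critical value. Since $\mathcal N$ is finite and the conditions of Theorem~\ref{thm:dr-asymptotics}(a) hold uniformly, the vector $(\psi_N^M(Z,\tilde A))_{N\in\mathcal N}$ is i.i.d.\ across $i$ with finite second moments. Each coordinate is unbiased for $V(\pi_N^s)$; this is exactly the content of part (a), and it uses the unbiasedness of the U-statistic constructions in Eqs.~\eqref{eq:plugin-pooled} and \eqref{eq:weight-pooled} together with Proposition~\ref{prop:bon-dr-score}. The multivariate CLT then gives $\sqrt n(\hat\Delta_N^M-\Delta_N)_{N\neq 1}\rightsquigarrow\mathcal N(0,\Sigma)$, with $\Sigma_{NN}=\tau_{N,M}^2$. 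By the weak law of large numbers, $\hat\tau_{N,M}\to\tau_{N,M}$ in probability, and these limits are bounded away from zero by assumption. The studentized vector therefore converges to a Gaussian with correlation matrix $R$. Its maximum $T:=\max_N\sqrt n(\hat\Delta_N^M-\Delta_N)/\hat\tau_{N,M}$ converges in law to $\max_N G_N$ with $G\sim\mathcal N(0,R)$.

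The next step, which I expect to be the main technical point, is to show that $\hat c^{\Delta,+}_{1-\alpha}\to c_{1-\alpha}$ in probability, where $c_{1-\alpha}$ is the $(1-\alpha)$-quantile of $\max_N G_N$. Conditional on the data, the bootstrap numerator is exactly Gaussian with covariance equal to the empirical covariance of the $D_{i,N}^M$, and this converges in probability to $\Sigma$. So the conditional law of $T^{*,+}_\Delta$ converges to that of $\max_N G_N$, and we only need continuity of the limit CDF at $c_{1-\alpha}$. With a fixed, finite grid this follows from standard low-dimensional arguments. The maximum of a nondegenerate-marginal Gaussian vector has a continuous distribution away from $-\infty$, which holds because every $G_N$ has unit variance, so the CDF of $\max_N G_N$ has no atoms. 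The high-dimensional machinery of \citet{chernozhukov2013gaussian} is not needed. With quantile consistency in hand, Slutsky gives the simultaneous coverage statement
\[
\prob\bigl(\Delta_N\geq \widehat{\mathrm{LCB}}_\Delta(N)\ \forall N\in\mathcal N\setminus\{1\}\bigr)=\prob\bigl(T\leq \hat c^{\Delta,+}_{1-\alpha}\bigr)\to 1-\alpha .
\]

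The no-harm guarantee follows by splitting into two cases. If $\hat N_{\mathrm{LCB}}=1$, then $V(\pi^s_{\hat N})=V(\pi_b)$ trivially. Otherwise $\widehat{\mathrm{LCB}}_\Delta(\hat N)>0$, and on the simultaneous coverage event this gives $\Delta_{\hat N}\geq\widehat{\mathrm{LCB}}_\Delta(\hat N)>0$. Harm can therefore only occur off the coverage event, whose probability is at most $\alpha+o(1)$. Simultaneity is essential here, because $\hat N$ is data-dependent.

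For oracle consistency, let $V^\star=\max_N V(\pi_N^s)$. If $V^\star=V(\pi_b)$, then $V(\pi^s_{\hat N})\leq V^\star$ always. Consistency of each $\hat\Delta_N^M$ and the boundedness of $\hat c$ and $\hat\tau$ give $\widehat{\mathrm{LCB}}_\Delta(N)=\Delta_N+O_p(n^{-1/2})$ uniformly over the grid. Any $N$ selected with $\Delta_N<V^\star-V(\pi_b)-\epsilon$ would have to beat the LCB of a maximizer $N^\star$, and this has probability tending to zero. So $V(\pi^s_{\hat N})\geq V^\star-\epsilon$ with probability tending to one. In the case $V^\star>V(\pi_b)$, the LCB of $N^\star$ is eventually positive, so the fallback to $N=1$ is not triggered. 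In the case $V^\star=V(\pi_b)$, either choice is optimal up to $\epsilon$, since any selected $N$ satisfies $\Delta_N\geq-\epsilon$ eventually. Letting $\epsilon\downarrow 0$ completes the argument, mirroring the proof of Corollary~\ref{thm:value-maximization}.
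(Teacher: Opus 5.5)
Your proposal is correct and follows the same route as the paper's proof: a joint CLT for the finite vector of improvement scores, conditional Gaussianity of the multiplier bootstrap to obtain simultaneous coverage, the split on whether the fallback to $N=1$ is triggered for the no-harm bound, and uniform $o_p(1)$ convergence of the LCBs for oracle consistency. You also supply details the paper leaves implicit, namely the atomlessness of the law of $\max_N G_N$ (which gives quantile consistency) and an $\epsilon$-argument that replaces the paper's three-case split on the sign of $\Delta^\star$.
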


Theorem~\ref{thm:safe-improvement} formalizes the no-harm property of the Safe Improvement selection rule. A candidate $N>1$ is selected \textit{only} when the improvement over the behavior policy is supported by the simultaneous LCB; otherwise, the rule defaults to $N=1$. The guarantee combines asymptotic no-harm with consistency; i.e., as the sample size grows, the conservatism induced by the LCB vanishes, and Safe Improvement recovers the best policy value over the fixed candidate grid. In finite samples, Safe Improvement may select smaller budgets than Value Maximization because it explicitly accounts for estimation uncertainty.

\section{Experiments}
\label{sec:experiments}

The purpose of our experiments is primarily to verify our theoretical properties. \rqone We use synthetic experiments to test whether our BoN-DR estimator remains \textbf{robust to reward estimator misspecification} while benefiting from informative reward predictions. \rqtwo We assess whether the estimators accurately \textbf{recover BoN policy values} across different reference models and score function regimes in realistic LLM settings. \rqthree We  test whether Safe Improvement reduces harmful choices of $N$ under reward overoptimization and thus leads to \textbf{safe inference-time budget selection}.

\textbf{Settings.} We consider two settings: \textbf{(1)}~\textbf{Synthetic experiments} with known ground truth, following standard practice in the OPE literature~\citep{Dudik.2011, saito2022off}. Here, the score function errors and reward estimator errors can be varied independently through parameters $\gamma$ (which increases the scores of rarely sampled, low-quality actions and thereby changes the induced BoN policy) and $\delta$ (which introduces errors in the predicted rewards without changing the induced policy). Appendix~\ref{app:controlled-details} provides details. \textbf{(2)}~\textbf{Realistic LLM evaluation setting} using GSM8K~\citep{cobbe2021training}, with three reference models (i.e., Gemma-2-2B-IT~\citep{gemma2024gemma2}, Llama-3.2-3B-Instruct~\citep{dubey2024llama3}, and Qwen3.5-4B) and two external reward models as score functions (i.e., OASST-RM-2.1-Pythia-1.4B~\citep{kopf2023openassistant} and Skywork-Reward-V2-Qwen3-1.7B). Details are in Appendix~\ref{app:gsm8k-details}. The true reward is the exact-match correctness (see Appendix~\ref{app:metrics}). Each prompt has a stored pool of $512$ responses. In each evaluation, one response is treated as the logged observation where the reward is revealed, while the remaining responses form the auxiliary pool (see Appendix~\ref{app:efficiency}).

\textbf{Baselines.\footnote{Additional details on the BoN-IPW and plug-in estimators are provided in Appendix~\ref{app:baseline_estimators}.}} To our knowledge, there are \underline{\textbf{no}} existing baselines that directly address OPE estimation for BoN policies from logged data. Instead, we compare three estimators based on Eq.~\eqref{eq:pooled-dr-estimator}: \textbf{(i) Plug-in} averages predictions of reward estimator under the target policy; \textbf{(ii) BoN-IPW} weights logged rewards using estimated rank probabilities; and \textbf{(iii) BoN-DR} uses an additional doubly robust correction. All estimators use the \textit{same} logged data and auxiliary samples. For selection, we use the rules from Section~\ref{sec:select_N}; i.e., \textbf{Value Maximization} and \textbf{Safe Improvement}.

\textbf{Performance metrics.} We measure bias, RMSE, interval width, and coverage of nominal $95\%$ pointwise confidence intervals. Selection is assessed by mean gain over $N=1$ and harm rate, the fraction of repetitions with negative gain.

\begin{figure*}[htbp]
    \centering
    \vspace{-0.2cm}
    \begin{minipage}[t]{0.56\textwidth}
        \vspace{-0.2cm}
        \centering
        \includegraphics[width=\linewidth]{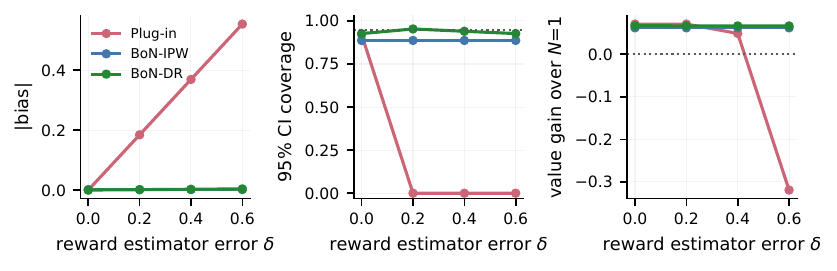}
        \vspace{-0.7cm}
        \captionof{figure}{
        \textbf{Robust to reward estimator misspecification.} Estimated BoN policy value at $N = 256$. We fix $\gamma = 0.65$ and vary $\delta$ from 0 to 0.6 to increase the reward estimator error. Shown:
        \textbf{(a)} Absolute bias of policy value estimation.
        \textbf{(b)} Empirical coverage of the nominal $95\%$ pointwise CI, the dotted line shows the nominal level.
        \textbf{(c)} Gain of the budget selected by Safe Improvement, where the dotted line indicates no improvement over $N=1$.
        }
        \label{fig:double-robustness}
    \end{minipage}
    \hfill
    \begin{minipage}[t]{0.40\textwidth}
        \vspace{0pt}
        \centering
        \small
        \resizebox{\linewidth}{!}{\begin{tabular}{llrrr}
\toprule
$N$ & Estimator & |Bias| & CI width & Coverage \\
\midrule
\multirow{3}{*}{8} & Plug-in & 0.049 & 0.003 & 0.000 \\
 & BoN-IPW & 0.002 & 0.097 & 0.940 \\
 & BoN-DR & \textbf{0.001} & 0.052 & \textbf{0.947} \\
\midrule
\multirow{3}{*}{256} & Plug-in & 0.369 & 0.004 & 0.000 \\
 & BoN-IPW & \textbf{0.002} & 0.377 & 0.887 \\
 & BoN-DR & 0.003 & 0.455 & \textbf{0.940} \\
\bottomrule
\end{tabular}
        }
        \parbox{\linewidth}{\raggedright{\tiny{\textsuperscript{*} CI width and empirical coverage are computed for nominal $95\%$ pointwise CIs.}}}
        \captionof{table}{
        \textbf{Sensitivity to sampling budget $N$.} Here: $\gamma=0.65$, $\delta = 0.4$.
        \textbf{Bold} shows the lowest absolute bias and the coverage closest to $95\%$ for each $N$.
        }
        \label{tab:controlled-calibration}
    \end{minipage}
\vspace{-0.4cm}
\end{figure*}

\rqone \textbf{Robustness to reward estimator misspecification.}
We evaluate robustness to reward estimator misspecification by varying $\delta$, which introduces errors in the reward estimator without changing the induced policy. We compare Plug-in, BoN-IPW, and BoN-DR in terms of absolute bias, CI coverage, and policy value gain over $N = 1$. Results are in Figure~\ref{fig:double-robustness}. Plug-in becomes biased as the errors in the reward prediction are increased. In contrast, BoN-DR and BoN-IPW remain nearly unbiased as $\delta$ increases. Further, they maintain CI coverage close to the nominal 95\% level across the considered levels of misspecification. \textit{$\Rightarrow$ As suggested by our theory, BoN-DR remains robust to reward model misspecification while supporting valid CIs.}

We next examine whether these results persist across different BoN budgets by comparing $N=8$ and $N=256$ in Table~\ref{tab:controlled-calibration}. \textit{$\Rightarrow$ The results again show that the benefits of BoN-DR are robust.}

In Appendix~\ref{app:validation_set}--\ref{app:efficiency}, we provide additional sensitivity analyses on the size of the validation set and the Monte Carlo pool. We study how the amount of validation data affects downstream policy value estimation and budget selection, and show that increasing the Monte Carlo pool reduces the additional Monte Carlo variance of BoN-DR and brings it closer to the efficiency bound under a correctly specified reward estimator.

\begin{figure}[htbp]
    \centering
    \includegraphics[width=0.80\linewidth]{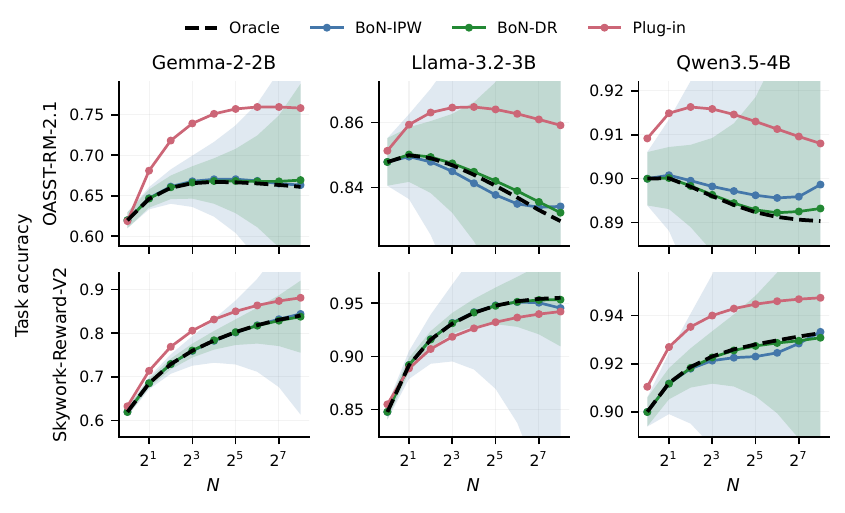}
    \caption{\textbf{BoN policy value estimation for realistic LLM setting on GSM8K.} Rows are external reward models used as score functions, and columns are reference models. \textbf{Black dashed:} oracle benchmark policy value. \textbf{Lines:} mean estimated policy value across repetitions. \textbf{Shaded:} $80\%$ pointwise CI. Plug-in CIs are not faithful wrt. nominal coverage and thus omitted.}
    \label{fig:ope-curves}
\end{figure}

\rqtwo \textbf{Performance in recovering BoN policy values in realistic LLM settings.} Figure~\ref{fig:ope-curves} compares the estimated policy values from Plug-in, BoN-IPW, and BoN-DR with the oracle policy values under different configurations. BoN-DR shows the most accurate estimates. Moreover, at $N=256$, BoN-DR yields substantially narrower CIs, with BoN-IPW intervals being $1.7$--$5.5\times$ wider. Notably, the Plug-in estimator remains biased even though the score function ranks responses accurately in the Skywork setting. This illustrates that effective ranking does not necessarily translate into accurate policy value estimation. \textit{$\Rightarrow$ BoN-DR accurately recovers BoN policy values in real-world settings while providing substantially more precise estimates than BoN-IPW.}

We provide additional results for CI coverage, error decomposition, overlap at large budgets, and robustness to alternative reward predictors in Appendices~\ref{app:real-coverage}--\ref{app:nuisance-robustness}.

\rqthree \textbf{Safe inference-time budget selection (``no-harm guarantee'').} We study whether Safe Improvement, which explicitly accounts for estimation uncertainty, helps avoid harmful choices of $N$, i.e., budgets that reduce policy value relative to the reference policy. This is particularly important under reward overoptimization with a strong reference model, where larger budgets achieve higher scores but can lower downstream performance. For comparison, we also report a Na{\"i}ve approach here (which chooses the response with the highest score at a fixed budget).

\begin{figure}[htbp]
    \centering
    \includegraphics[width=0.55\linewidth]{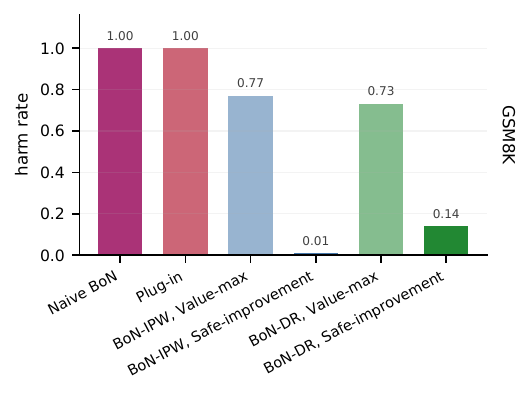}
    \caption{\textbf{Comparison of budget selection rules for Qwen3.5-4B with OASST-RM-2.1.} Here, the harm rate is defined as the fraction of evaluations with negative gain relative to $N=1$. Lower is better.
    Plug-in uses Value Maximization, and Na\"ive BoN selects $N_{\max}=256$.
    }
    \label{fig:selection}
\end{figure}

We have two main findings (see Figure~\ref{fig:selection}). First, both Na{\"i}ve BoN and Plug-in perform poorly overall and often lead to harmful selections. In contrast, our proposed selection rules (Value Maximization and Safe Improvement) are substantially better. Second, Safe Improvement leads to fewer harmful selections than Value Maximization. Appendix~\ref{app:selection-extra} reports detailed results across various configurations. \textit{$\Rightarrow$ Thanks to the uncertainty-aware approach, Safe Improvement can reduce harmful selections compared to Value Maximization, which is in line with our no-harm guarantee.}

\textbf{Conclusion.} We present, to our knowledge, the first framework for evaluating and selecting BoN policies under sample-only access, with valid inference and a no-harm guarantee. As a \textbf{limitation}, it requires logged data and loses precision at large budgets. Future work could use our per-prompt scores to learn prompt-specific budgets and extend likelihood-free evaluation to other inference-time methods built on black-box models.

\clearpage

\bibliographystyle{iclr2027_conference}
\bibliography{bibliography}

\clearpage

\appendix

\section{Extended related work}
\label{app:related_work}

\textbf{BoN and inference-time compute.}
Theoretical work characterizes the BoN policy, its KL divergence to the reference policy, and its optimality and scaling behavior \citep{gui2024bonbon, yang2024asymptotics, mroueh2025information, beirami2025guarantees, huang2025best, sriraman2026revisiting}. More broadly, repeated sampling with a verifier or reward model is a central way to scale inference-time compute \citep{brown2024large, snell2024scaling}, where best-of-$k$ performance is commonly estimated via the unbiased pass@$k$ estimator \citep{chen2021evaluating}. Our plug-in term (Eq.~\eqref{eq:plugin-u-stat}) follows the same U-statistic principle, but such estimators require the reward of every sampled response. In our setting, rewards are observed only for logged responses, which is why BoN-DR needs its correction term.

\textbf{Mitigating reward overoptimization.}
Existing mitigations change the deployed policy by modifying (i)~the score, e.g., via reward-model ensembles \citep{coste2024reward} or uncertainty penalties \citep{yu2026curiosity}; (ii)~the selection rule, e.g., via regularized, pessimistic, or soft BoN \citep{jinnai2025regularized, huang2025best, verdun2025soft, aminian2026best}; or (iii)~the model, via fine-tuning or distillation \citep{gui2024bonbon, sessa2025bond, chow2025inference}. Some of these additionally require response likelihoods or model weights. Our work is complementary: we keep standard BoN fixed and evaluate it and select its budget from logged data.

\textbf{Selecting inference-time parameters.}
Closest to our work, \citet{khalaf2025inferencetime} tune the BoN budget via the peak of the true reward. Their method is sample-only but requires true rewards for all sampled responses and provides no uncertainty quantification. \citet{raman2025adabon} instead allocate a given budget adaptively across prompts. In contrast, we select $N$ from logged data with simultaneous confidence bounds and a no-harm guarantee.

\textbf{Off-policy evaluation.}
Beyond the classical estimators \citep{li2011unbiased, horvitz1952generalization, strehl2010learning, Dudik.2011}, OPE has been extended to improve efficiency \citep{wang2017optimal, narita2019efficient, Kallus.2020}, handle limited overlap or distribution shift \citep{Kallus.2018, Kallus.2021b, Schweisthal.2023, Hatt.2022b, hess2026efficient}, and large action spaces via embeddings \citep{saito2022off}, as well as to human-feedback and preference data \citep{li2024policy, bhargava2024off, frauen2026nonparametric}. All of these require the action probabilities of the behavior policy, which are unavailable for free-form text under sample-only access. We instead exploit the known BoN selection mechanism, which expresses the density ratio through rank probabilities.

\textbf{Uncertainty-aware policy selection.}
High-confidence OPE and safe policy improvement use lower confidence bounds to avoid underperforming a baseline \citep{thomas2015highOPE, thomas2015highimprove, laroche2019safe}, with related confidence-based selection methods for contextual bandits \citep{kuzborskij2021confident, karampatziakis2021off, ryu2025improved, jin2025policy}. Safe Improvement applies this principle to BoN, with simultaneous bounds across budgets that share the same logged data and auxiliary samples.

\newpage

\section{Proofs}
\label{app:proofs}

\subsection{Regularity conditions}
\label{app:regularity}

We first make precise the regularity conditions used in the asymptotic
results. For the efficiency result, let $s_n$ and $\hat\mu_n$ denote
the fitted score and reward functions at sample size $n$; the main text
suppresses this dependence on $n$. These functions may coincide.

Let $\mathcal T_n$ denote the data used to fit $(s_n,\hat\mu_n)$.

\begin{assumption}[Regularity conditions]
\label{ass:regularity}
The following conditions hold.

\begin{enumerate}
    \item[(i)] \textbf{Sampling.}
    The evaluation observations
    \[
    Z_i=(X_i,A_i,Y_i),\qquad i=1,\ldots,n,
    \]
    are i.i.d., with
    \[
    A_i\mid X_i=x\sim\pi_b(\cdot\mid x),
    \qquad
    \E[Y_i\mid X_i=x,A_i=a]=\mu(x,a).
    \]
    The action space $\mathcal A$ is finite or countable.
    The training data $\mathcal T_n$ are independent of the
    evaluation sample.

    Conditional on $X_i$, the auxiliary actions
    \[
    \tilde A_{i1},\ldots,\tilde A_{iM}
    \overset{\mathrm{i.i.d.}}{\sim}\pi_b(\cdot\mid X_i)
    \]
    are independent of $(A_i,Y_i)$, and auxiliary pools are
    independent across $i$.

    \item[(ii)] \textbf{Budgets.}
    The BoN budget $N$ is fixed as $n\to\infty$.
    For the finite-pool result, $M\geq N-1$ is fixed.
    For the efficiency result, $M=M_n\to\infty$.

    \item[(iii)] \textbf{Moments.}
    For finite-$M$ validity,
    \[
    \E[Y^2]<\infty,
    \qquad
    \E[\hat\mu_n(X,A)^2]<\infty,
    \]
    where $A\sim\pi_b(\cdot\mid X)$.
    For the efficiency result, for some $\delta>0$,
    \[
    \E[|Y|^{2+\delta}]<\infty,
    \qquad
    \sup_n
    \E[
        |\hat\mu_n(X,A)|^{2+\delta}
    ]<\infty.
    \]

    \item[(iv)] \textbf{reward function consistency.}
    For the efficiency result,
    \[
    \|\hat\mu_n-\mu\|_{L_2(P_{X,A})}
    =o_p(1),
    \]
    where $P_{X,A}$ is induced by
    $X\sim P_X$ and $A\sim\pi_b(\cdot\mid X)$.

    \item[(v)] \textbf{Nondegenerate limiting variance.}
    Define
    \[
    \phi_{N,n}^{\mathrm{eff}}(Z)
    =
    \sum_{a\in\mathcal A}
    \pi_N^{s_n}(a\mid X)\mu(X,a)
    +
    w_N^{s_n}(X,A)
    \{Y-\mu(X,A)\}
    -
    V(\pi_N^{s_n}).
    \]
    We assume
    \[
    Var(
        \phi_{N,n}^{\mathrm{eff}}(Z)
    )
    \longrightarrow
    \sigma_{N,\mathrm{eff}}^2
    \in(0,\infty).
    \]
    The finite-$M$ variances appearing in
    Theorem~\ref{thm:dr-asymptotics}(a) are also assumed positive
    and finite.
\end{enumerate}
\end{assumption}

The efficiency statement treats the behavior sampler
$\pi_b(\cdot\mid x)$ and, conditional on the training sample, the
induced BoN policy as fixed. Thus the unknown components relevant to
the efficiency bound are the context distribution and conditional
reward distribution.

Unlike generic doubly robust estimators with two learned nuisance
functions, no product-rate condition such as
\[
\|\hat\mu-\mu\|_2\,
\|\hat w-w\|_2
=o_p(n^{-1/2})
\]
is required. The pooled weight estimator is conditionally unbiased
for the exact BoN density ratio, so finite auxiliary sampling
introduces mean-zero Monte Carlo variation rather than
nuisance-estimation bias. The independent training sample also avoids
Donsker or entropy conditions on the fitted model classes.

\subsection{Proof of Proposition~\ref{prop:bon-dr-score}}

\begin{proof}
Fix $x$ and $a$ with $\pi_b(a\mid x)>0$, and write
\[
F^-:=F_x^-(a),
\qquad
F:=F_x(a),
\qquad
q:=F-F^-.
\]
Here $q$ is the probability that an independent behavior-policy draw
has the same score as $a$, so
$q\geq\pi_b(a\mid x)>0$.

Condition on one of the $N$ BoN candidates being equal to $a$.
If exactly $t$ of the other $N-1$ candidates have equal score and
the remaining $N-1-t$ have smaller score, this occurrence of $a$ is
selected with probability $1/(t+1)$. By exchangeability of the
candidate positions,
\begin{align}
w_N^s(x,a)
&=
N\sum_{t=0}^{N-1}
\binom{N-1}{t}
q^t(F^-)^{N-1-t}
\frac{1}{t+1}
\\
&=
\sum_{k=1}^N
\binom Nk
q^{k-1}(F^-)^{N-k}
\\
&=
\frac{F^N-(F^-)^N}{F-F^-}.
\end{align}
Using
\[
\frac{u^N-v^N}{u-v}
=
\sum_{\ell=0}^{N-1}
u^{N-1-\ell}v^\ell
\]
gives
\[
w_N^s(x,a)
=
g_N(F_x(a),F_x^-(a)).
\]

This representation also implies
\begin{equation}
0\leq w_N^s(x,a)\leq N,
\label{eq:app-weight-bound}
\end{equation}
because $g_N$ is a sum of $N$ terms in $[0,1]$.

Finally,
\begin{align}
\sum_a
\pi_N^s(a\mid x)\hat\mu(x,a)
&=
\sum_a
\pi_b(a\mid x)
w_N^s(x,a)\hat\mu(x,a)
\\
&=
\E_{A'\sim\pi_b(\cdot\mid x)}
\left[
g_N(F_x(A'),F_x^-(A'))
\hat\mu(x,A')
\right],
\end{align}
where actions outside the support of $\pi_b$ have zero probability
under BoN. Substituting the same expression for the density ratio in
the correction term gives~\eqref{eq:bon-dr-score}.
\end{proof}

\subsection{Finite-pool unbiasedness}

For a function $f$, define
\[
m_{N,f}(x)
:=
\E_{A\sim\pi_N^s(\cdot\mid x)}
[f(x,A)].
\]

\begin{lemma}
\label{lem:pooled-unbiased}
For every $M\geq N-1$,
\begin{align}
\E[
\hat v_{i,N}^{s,M+1}
\mid X_i
]
&=
m_{N,\hat\mu}(X_i),
\label{eq:app-plugin-unbiased}
\\
\E[
\hat w_{i,N}^{s,M}
\mid X_i,A_i
]
&=
w_N^s(X_i,A_i).
\label{eq:app-weight-unbiased}
\end{align}
Moreover,
\[
0\leq
\hat w_{i,N}^{s,M}
\leq N.
\]
\end{lemma}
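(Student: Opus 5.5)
The plan is to show that both pooled quantities are U-statistics whose kernel, evaluated on $N$ (respectively $N-1$) i.i.d.\ draws from $\pi_b(\cdot\mid X_i)$, has exactly the target expectation. Unbiasedness then follows by averaging over subsets. The only real work is checking that the closed-form counting expressions, Eq.~\eqref{eq:plugin-pooled} and Eq.~\eqref{eq:weight-pooled}, coincide with those subset averages.

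\textbf{Plug-in term.} Conditional on $X_i$, the vector $\bar a_i=(a_i,\tilde a_{i1},\dots,\tilde a_{iM})$ consists of $M+1$ i.i.d.\ draws from $\pi_b(\cdot\mid X_i)$, by Assumption~\ref{ass:regularity}(i) and $A_i\mid X_i\sim\pi_b$. Hence for every subset $J$ with $|J|=N$, the subvector $\bar a_{i,J}$ is distributed as $N$ i.i.d.\ behavior draws. Its score maximizer $\bar a^{\star,s}_{i,J}$ is therefore distributed as $\pi_N^s(\cdot\mid X_i)$, which is the definition of the BoN policy. Under the no-ties assumption, distinct actions have distinct scores, so the argmax is well defined as an action even when copies are repeated. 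It follows that $\E[\hat\mu(X_i,\bar a^{\star,s}_{i,J})\mid X_i]=m_{N,\hat\mu}(X_i)$. Averaging over the $\binom{M+1}{N}$ subsets gives Eq.~\eqref{eq:app-plugin-unbiased} for the form in Eq.~\eqref{eq:plugin-u-stat}.

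It remains to identify this with Eq.~\eqref{eq:plugin-pooled}. A subset $J$ has winner equal to the distinct action $\bar a_{ig}$ exactly when all its elements score at most $s(x_i,\bar a_{ig})$ and at least one element scores equal to it. There are $E_{ig}$ indices of the first kind, and $L_{ig}$ of them are strictly below. The number of such subsets is therefore $\binom{E_{ig}}{N}-\binom{L_{ig}}{N}$, which is the claimed identity.

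\textbf{Correction term.} Conditional on $(X_i,A_i)$, the auxiliary pool $\tilde a_i$ is i.i.d.\ $\pi_b(\cdot\mid X_i)$ and independent of $A_i$. Define the kernel on $N-1$ competitors $b_1,\dots,b_{N-1}$ by
\[
h(b_{1:N-1}) = \frac{N}{t+1}\,\1\{s(x,b_j)\le s(x,a)\ \forall j\},
\]
where $t$ is the number of $b_j$ with $s(x,b_j)=s(x,a)$. For i.i.d.\ competitors, $T\sim\mathrm{Bin}(N-1,q)$ jointly with the event that the remaining $N-1-T$ competitors score strictly below $a$. Taking expectations therefore gives
\[
N\sum_t\binom{N-1}{t}q^t(F^-)^{N-1-t}\frac{1}{t+1}.
\]
The proof of Proposition~\ref{prop:bon-dr-score} already shows that this sum equals $w_N^s(x,a)$. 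So $\binom{M}{N-1}^{-1}\sum_{|J|=N-1}h(\tilde a_{i,J})$ is conditionally unbiased, which is Eq.~\eqref{eq:app-weight-unbiased}.

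To match this with Eq.~\eqref{eq:weight-pooled}, I would group the subsets by $t$. Choosing $t$ of the $E_i-L_i$ tied draws and $N-1-t$ of the $L_i$ strictly smaller draws gives $\binom{E_i-L_i}{t}\binom{L_i}{N-1-t}$ subsets, each with kernel value $N/(t+1)$. Subsets that contain any larger-scoring draw contribute zero. The requirement $M\ge N-1$ ensures that $\binom{M}{N-1}\ge1$, so the estimator is well defined.

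\textbf{Bound.} The kernel satisfies $0\le h\le N$, since $1/(t+1)\le 1$. An average of such values also lies in $[0,N]$, so $0\le\hat w_{i,N}^{s,M}\le N$.

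\textbf{Main obstacle.} I expect the main obstacle to be the careful handling of repeated copies of the same action, both among pool draws and between the pool and $a_i$. Concretely, two points need checking. The first is that the winner of a subset is determined as an action, which is where the no-ties assumption is needed. The second is that the $1/(t+1)$ tie-splitting in the kernel reproduces the exchangeability argument in Proposition~\ref{prop:bon-dr-score}, namely that the weight is $N$ times the probability that the first position holds the selected copy of $a$. Everything else is routine counting and linearity of expectation.
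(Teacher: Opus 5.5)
Your proposal is correct and follows the paper's proof essentially step for step. Unbiasedness comes from each size-$N$ (resp.\ $(N-1)$) subset being a fresh set of i.i.d.\ behavior draws. Your weight kernel $N\,\1\{\cdot\}/(t+1)$ is exactly the paper's $Q_J$, and its expectation is identified with $w_N^s$ through the binomial sum in the proof of Proposition~\ref{prop:bon-dr-score}. The closed forms and the bound $0\le\hat w_{i,N}^{s,M}\le N$ then follow from the same subset counting and averaging.

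The only small difference is in the plug-in kernel. The paper lets it average over uniform tie-breaking among tied maximizing occurrences, whereas you use the no-ties assumption to make the subset winner a well-defined action. That choice is consistent with the form of Eq.~\eqref{eq:plugin-pooled} as stated.
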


\begin{proof}
Conditional on $X_i=x$, the stacked actions
\[
A_i,\tilde A_{i1},\ldots,\tilde A_{iM}
\]
are $M+1$ independent draws from $\pi_b(\cdot\mid x)$. Hence every
size-$N$ subset in~\eqref{eq:plugin-u-stat} has the distribution of
$N$ fresh BoN candidates. The kernel $h_{i,J}$ analytically averages
over uniform tie-breaking, so
\[
\E[h_{i,J}\mid X_i=x]
=
m_{N,\hat\mu}(x).
\]
Averaging over subsets proves~\eqref{eq:app-plugin-unbiased}.

For completeness, an $N$-subset has maximum score $r_{ig}$ exactly
when it is contained among the $E_{ig}$ observations with score at
most $r_{ig}$ but not among the $L_{ig}$ observations with strictly
smaller score. There are therefore
\[
\binom{E_{ig}}N-\binom{L_{ig}}N
\]
such subsets. Symmetry across the tied maximizing occurrences gives
their average prediction $\bar\mu_{ig}$, yielding
\eqref{eq:plugin-pooled}.

For the weight estimator, fix $(X_i,A_i)=(x,a)$ and a subset $J$ of
$N-1$ auxiliary observations. Define
\[
Q_J
=
N
\frac{
\1\{
s(x,\tilde A_{ij})\leq s(x,a)
\text{ for all }j\in J
\}
}{
1+
\sum_{j\in J}
\1\{
s(x,\tilde A_{ij})=s(x,a)
\}
}.
\]
The quantity $Q_J/N$ is exactly the probability that the distinguished
logged occurrence $a$ is selected when it is combined with the
$N-1$ candidates in $J$. The argument in
Proposition~\ref{prop:bon-dr-score} therefore gives
\[
\E[Q_J\mid X_i=x,A_i=a]
=
w_N^s(x,a).
\]
Averaging over all such subsets proves
\eqref{eq:app-weight-unbiased}. Grouping subsets according to the
number $t$ of equal-score competitors gives exactly
\eqref{eq:weight-pooled}. Since $0\leq Q_J\leq N$, the same bound
holds for $\hat w_{i,N}^{s,M}$.
\end{proof}

\begin{lemma}
\label{lem:score-unbiased}
For every $M\geq N-1$,
\[
\E[
\psi_N^M(Z_i,\tilde A_i)
]
=
V(\pi_N^s),
\]
for any fixed reward function satisfying the required moment
conditions.
\end{lemma}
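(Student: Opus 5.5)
The plan is to apply the tower property, conditioning first on $(X_i,A_i,Y_i)$ and then on $X_i$, and to use the conditional unbiasedness statements of Lemma~\ref{lem:pooled-unbiased} term by term. Throughout, the reward estimator $\hat\mu$ and the score $s$ are fixed; conditionally on the training sample they are deterministic functions. We split
\[
\psi_N^M(Z_i,\tilde A_i)=\hat v_{i,N}^{s,M+1}+\hat w_{i,N}^{s,M}\bigl(Y_i-\hat\mu(X_i,A_i)\bigr)
\]
and treat the two pieces separately. Integrability is handled first. The weight satisfies $0\le\hat w_{i,N}^{s,M}\le N$ by Lemma~\ref{lem:pooled-unbiased}. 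The plug-in piece is a convex combination of values $\hat\mu(X_i,\bar a_{im})$, so $|\hat v_{i,N}^{s,M+1}|\le\sum_{m=0}^M|\hat\mu(X_i,\bar a_{im})|$. Each of these summands is $\hat\mu$ evaluated at a $\pi_b$-draw and therefore has finite second moment. Hence both pieces are integrable, and Fubini/tower manipulations are legitimate.

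\textbf{Plug-in term.} By \eqref{eq:app-plugin-unbiased}, $\E[\hat v_{i,N}^{s,M+1}\mid X_i]=m_{N,\hat\mu}(X_i)=\sum_a\pi_N^s(a\mid X_i)\hat\mu(X_i,a)$.

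\textbf{Correction term.} By Assumption~\ref{ass:regularity}(i), the auxiliary pool is independent of $(A_i,Y_i)$ given $X_i$. So, conditionally on $(X_i,A_i,Y_i)$, $\hat w_{i,N}^{s,M}$ depends only on $(X_i,A_i)$ and the pool, and $Y_i-\hat\mu(X_i,A_i)$ is fixed. This gives
\[
\E\bigl[\hat w_{i,N}^{s,M}(Y_i-\hat\mu(X_i,A_i))\mid X_i,A_i,Y_i\bigr]=w_N^s(X_i,A_i)\bigl(Y_i-\hat\mu(X_i,A_i)\bigr),
\]
using \eqref{eq:app-weight-unbiased}. Next we condition on $(X_i,A_i)$ and use $\E[Y_i\mid X_i,A_i]=\mu(X_i,A_i)$, which yields $w_N^s(X_i,A_i)(\mu-\hat\mu)(X_i,A_i)$. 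Finally we take the expectation over $A_i\sim\pi_b(\cdot\mid X_i)$. Because $w_N^s=\pi_N^s/\pi_b$ on the support of $\pi_b$, and BoN places no mass outside that support (Proposition~\ref{prop:bon-dr-score}), this equals $\sum_a\pi_N^s(a\mid X_i)(\mu-\hat\mu)(X_i,a)$.

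Adding the two conditional expectations, the $\hat\mu$ contributions cancel. What remains is $\E[\psi_N^M\mid X_i]=\sum_a\pi_N^s(a\mid X_i)\mu(X_i,a)$, and taking the expectation over $X_i$ gives $V(\pi_N^s)$ by definition.

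\textbf{Main obstacle.} The cancellation itself is routine. The step needing care is justifying the conditional factorization in the correction term: the pool must be conditionally independent of $Y_i$, not only of $A_i$. Assumption~\ref{ass:regularity}(i) supplies exactly this, together with the integrability of the product $\hat w(Y-\hat\mu)$. The product is handled by the bound $\hat w\le N$ combined with $\E[Y^2]<\infty$ and $\E[\hat\mu^2]<\infty$. No consistency of $\hat\mu$ is used anywhere, which matches the claim that the identity holds for any fixed reward function.
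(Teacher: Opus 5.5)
Your proposal is correct and matches the paper's proof essentially step for step. Both arguments proceed in four steps:
\begin{enumerate}
\item unbiasedness of the plug-in U-statistic given $X_i$;
\item unbiasedness of the pooled weight given $(X_i,A_i)$, combined with the pool's conditional independence from $Y_i$;
\item averaging over $A_i\sim\pi_b(\cdot\mid X_i)$, so that the $\hat\mu$ contributions cancel;
\item a final expectation over $X_i$.
\end{enumerate}
Your explicit integrability check (via $0\le\hat w\le N$ and the convex-combination bound on the plug-in term) and the intermediate conditioning on $Y_i$ are harmless refinements that the paper leaves implicit.
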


\begin{proof}
Lemma~\ref{lem:pooled-unbiased} gives
\[
\E[
\hat v_{i,N}^{s,M+1}
\mid X_i
]
=
m_{N,\hat\mu}(X_i).
\]
Also, conditional on $(X_i,A_i)$, the auxiliary pool is independent
of $Y_i$, so
\begin{align}
&
\E\left[
\hat w_{i,N}^{s,M}
\{Y_i-\hat\mu(X_i,A_i)\}
\mid X_i,A_i
\right]
\\
&\qquad=
w_N^s(X_i,A_i)
\{\mu(X_i,A_i)-\hat\mu(X_i,A_i)\}.
\end{align}
Averaging over $A_i\sim\pi_b(\cdot\mid X_i)$ gives
\[
m_{N,\mu}(X_i)-m_{N,\hat\mu}(X_i).
\]
Thus
\[
\E[
\psi_N^M(Z_i,\tilde A_i)
\mid X_i
]
=
m_{N,\mu}(X_i),
\]
and taking expectation over $X_i$ proves the result.
\end{proof}

\subsection{Proof of Theorem~\ref{thm:dr-asymptotics}}

\begin{proof}
For fixed $M$, conditional on the fitted functions,
$(Z_i,\tilde A_i)$ are i.i.d. By
Lemma~\ref{lem:score-unbiased},
\[
\E[
\psi_N^M(Z_i,\tilde A_i)
]
=
V(\pi_N^s).
\]
The finite-second-moment assumption and the ordinary central limit
theorem therefore give
\[
\sqrt n
\left(
\hat V_{\mathrm{DR}}^M(\pi_N^s)
-
V(\pi_N^s)
\right)
\rightsquigarrow
\mathcal N(0,\sigma_{N,M}^2),
\]
proving part~(a). No consistency of $\hat\mu$ is used.

For part~(b), define the exact-ratio DR score
\[
\psi_{N,n}^{\infty}(Z)
=
m_{N,\hat\mu_n}(X)
+
w_N^{s_n}(X,A)
\{Y-\hat\mu_n(X,A)\}.
\]
A fixed-order U-statistic with a square-integrable kernel converges in
$L_2$ to its expectation as the pool size increases; more precisely,
its variance is $O(M^{-1})$ for fixed kernel order. Applying this to
the plug-in and weight kernels, using
$0\leq\hat w_N^{s_n,M},w_N^{s_n}\leq N$, yields
\begin{equation}
\left\|
\psi_N^{M_n}(Z,\tilde A;\hat\mu_n)
-
\psi_{N,n}^{\infty}(Z)
\right\|_2
=o_p(1).
\label{eq:app-pool-l2}
\end{equation}
The moment conditions in Assumption~\ref{ass:regularity} ensure the
corresponding U-statistic second moments are uniformly bounded.

Let
\[
\delta_n(x,a)
=
\hat\mu_n(x,a)-\mu(x,a).
\]
Since $w_N^{s_n}\leq N$,
\begin{align}
\E[
m_{N,\delta_n}(X)^2
]
&\leq
\E_{X,A\sim\pi_N^{s_n}}
[
\delta_n(X,A)^2
]
\\
&=
\E_{X,A\sim\pi_b}
[
w_N^{s_n}(X,A)
\delta_n(X,A)^2
]
\\
&\leq
N
\|\delta_n\|_2^2.
\end{align}
Hence
\begin{align}
&
\left\|
\psi_{N,n}^{\infty}(Z)
-
\left[
m_{N,\mu}(X)
+
w_N^{s_n}(X,A)
\{Y-\mu(X,A)\}
\right]
\right\|_2
\\
&\qquad\leq
(\sqrt N+N)
\|\hat\mu_n-\mu\|_2
=o_p(1).
\label{eq:app-mu-l2}
\end{align}
Combining~\eqref{eq:app-pool-l2} and
\eqref{eq:app-mu-l2},
\[
\left\|
\psi_N^{M_n}(Z,\tilde A;\hat\mu_n)
-
\left[
V(\pi_N^{s_n})
+
\phi_{N,n}^{\mathrm{eff}}(Z)
\right]
\right\|_2
=o_p(1).
\]

Both scores have expectation $V(\pi_N^{s_n})$. Therefore, if $R_{i,n}$
denotes their difference,
\[
\E[R_{i,n}]=0,
\qquad
\E[R_{i,n}^2]=o_p(1),
\]
and independence across evaluation observations gives
\[
\frac1{\sqrt n}
\sum_{i=1}^n R_{i,n}
=o_p(1).
\]
Consequently,
\begin{equation}
\sqrt n
\left(
\hat V_{\mathrm{DR}}^{M_n}(\pi_N^{s_n})
-
V(\pi_N^{s_n})
\right)
=
\frac1{\sqrt n}
\sum_{i=1}^n
\phi_{N,n}^{\mathrm{eff}}(Z_i)
+
o_p(1).
\label{eq:app-asym-linear}
\end{equation}
The $(2+\delta)$ moment condition implies the Lindeberg condition, and
the assumed variance convergence therefore yields
\[
\frac1{\sqrt n}
\sum_{i=1}^n
\phi_{N,n}^{\mathrm{eff}}(Z_i)
\rightsquigarrow
\mathcal N(
0,\sigma_{N,\mathrm{eff}}^2
).
\]
Slutsky's theorem proves part~(b).

Finally, conditional on the behavior sampler and fitted score,
$\pi_N^{s_n}$ is a fixed target policy. The standard AIPW influence
function for its value is
\[
m_{N,\mu}(X)
-
V(\pi_N^{s_n})
+
w_N^{s_n}(X,A)
\{Y-\mu(X,A)\}.
\]
Indeed, perturbing the context distribution produces the first term,
while perturbing the conditional reward distribution produces the
second; because $\pi_b$ is treated as fixed, there is no additional
behavior-policy term. Thus
$\phi_{N,n}^{\mathrm{eff}}$ is the efficient influence function in
this model, and~\eqref{eq:app-asym-linear} establishes the claimed
efficiency.

The argument also shows why no rate condition between $M_n$ and $n$
is required: the finite-pool approximation is mean zero for every
$M_n$, so it is enough that its per-observation variance vanish as
$M_n\to\infty$.
\end{proof}

\subsection{Proof of Corollary~\ref{thm:value-maximization}}

\begin{proof}
Write
\[
V_N=V(\pi_N^s),
\qquad
\hat V_N=\hat V_{\mathrm{DR}}^M(\pi_N^s).
\]
By Theorem~\ref{thm:dr-asymptotics}(a) and finiteness of
$\mathcal N$,
\[
\epsilon_n
:=
\max_{N\in\mathcal N}
|\hat V_N-V_N|
=o_p(1).
\]
Let $N^\star\in\argmax_{N\in\mathcal N}V_N$. Since
$\hat N_{\mathrm{value}}$ maximizes $\hat V_N$,
\[
V_{N^\star}
\leq
\hat V_{N^\star}+\epsilon_n
\leq
\hat V_{\hat N_{\mathrm{value}}}+\epsilon_n
\leq
V_{\hat N_{\mathrm{value}}}+2\epsilon_n.
\]
Therefore
\[
0
\leq
\max_{N\in\mathcal N}V_N
-
V_{\hat N_{\mathrm{value}}}
\leq
2\epsilon_n
=o_p(1),
\]
which proves the result. Uniqueness of the maximizing budget is not
required because the conclusion concerns policy value rather than the
selected index.
\end{proof}

\subsection{Proof of Theorem~\ref{thm:safe-improvement}}

\begin{proof}
Let
\[
\mathcal N_+
=
\mathcal N\setminus\{1\},
\]
and collect the improvement scores into
\[
D_i
=
(D_{i,N}^M:N\in\mathcal N_+),
\qquad
\Delta
=
(\Delta_N:N\in\mathcal N_+).
\]
By Lemma~\ref{lem:score-unbiased},
\[
\E[D_{i,N}^M]=\Delta_N.
\]
Since $\mathcal N$ is fixed and finite, the multivariate central limit
theorem gives
\[
\sqrt n(\hat\Delta^M-\Delta)
\rightsquigarrow
\mathcal N(0,\Sigma),
\qquad
\Sigma=Var(D_i),
\]
and the empirical covariance matrix converges in probability to
$\Sigma$.

Conditional on the data,
\[
\frac1{\sqrt n}
\sum_{i=1}^n
\xi_i
(D_i-\hat\Delta^M)
\]
is exactly Gaussian with the empirical covariance matrix. Hence the
multiplier bootstrap consistently estimates the distribution of the
maximum standardized estimation error. Therefore
\begin{equation}
\prob\left(
\Delta_N
\geq
\widehat{\mathrm{LCB}}_\Delta(N)
\text{ for every }N\in\mathcal N_+
\right)
=
1-\alpha+o(1).
\label{eq:app-simultaneous}
\end{equation}

On the event in~\eqref{eq:app-simultaneous}, if
$\hat N_{\mathrm{LCB}}=1$ then the selected value equals
$V(\pi_b)$. Otherwise,
\[
\widehat{\mathrm{LCB}}_\Delta(
\hat N_{\mathrm{LCB}}
)>0,
\]
so simultaneous coverage implies
\[
\Delta_{\hat N_{\mathrm{LCB}}}>0.
\]
Thus
\[
\prob\left(
V(\pi_{\hat N_{\mathrm{LCB}}}^s)
\geq
V(\pi_b)
\right)
\geq
1-\alpha-o(1).
\]

For oracle consistency, finiteness of the grid gives
\[
\max_{N\in\mathcal N_+}
|\hat\Delta_N^M-\Delta_N|
=o_p(1).
\]
The bootstrap critical value and estimated standard deviations are
$O_p(1)$, so the confidence penalties are $o_p(1)$ and therefore
\[
\max_{N\in\mathcal N_+}
\left|
\widehat{\mathrm{LCB}}_\Delta(N)
-
\Delta_N
\right|
=o_p(1).
\]
Let
\[
\Delta^\star
=
\max_{N\in\mathcal N_+}\Delta_N.
\]
If $\Delta^\star>0$, the positivity threshold is eventually passed
and the finite-grid argmax argument implies that the selected
improvement converges to $\Delta^\star$. If $\Delta^\star<0$, all
lower bounds are eventually negative and the procedure selects
$N=1$. If $\Delta^\star=0$, either the baseline is selected or a
budget whose population improvement converges to zero is selected.
In all cases,
\[
V(\pi_{\hat N_{\mathrm{LCB}}}^s)
\overset{p}{\longrightarrow}
V(\pi_b)+\max\{0,\Delta^\star\}
=
\max_{N\in\mathcal N}
V(\pi_N^s).
\]
\end{proof}

\clearpage

\section{Experimental details}
\label{app:experimental-details}

\subsection{Synthetic experiments}
\label{app:controlled-details}

\textbf{Generative model.} Let $X$ be uniform on $\{-1,+1\}$. The action space contains $16$ main actions and eight tail actions,
\[
\mathcal A=\mathcal A_{\mathrm{main}}\cup\mathcal A_{\mathrm{tail}}.
\]
Their base qualities are
\[
q_0(a_j)=0.55+\frac{0.30(j-1)}{15},\quad j=1,\ldots,16,
\qquad
q_0(b_k)=0.05+\frac{0.20(k-1)}{7},\quad k=1,\ldots,8.
\]
The behavior policy is independent of context:
\[
\pi_b(a\mid x)=
\begin{cases}
0.96/16,&a\in\mathcal A_{\mathrm{main}},\\
0.04/8,&a\in\mathcal A_{\mathrm{tail}}.
\end{cases}
\]
Conditional rewards satisfy
\[
Y\mid X=x,A=a\sim\operatorname{Bernoulli}(\mu(x,a)),
\qquad \mu(x,a)=q_0(a)+0.04x.
\]

\textbf{Independent error controls.} We specify the score function and reward function as
\[
s_\gamma(x,a)=\mu(x,a)+\gamma\,\mathbf1\{a\in\mathcal A_{\mathrm{tail}}\},
\]
\[
\hat\mu_\delta(x,a)=\operatorname{clip}\!\left(
\mu(x,a)+\delta\,\mathbf1\{a\in\mathcal A_{\mathrm{tail}}\},0,1\right).
\]
Thus, $\gamma$ changes the ranking of tail actions and hence the target policy $\pi_N^{s_\gamma}$. In contrast, $\delta$ changes the reward function predictions while leaving the target policy fixed. At $\delta=0$, the reward function is correct. The default configuration is $\gamma=0.65$, $\delta=0.4$. The selection comparison in Figure~\ref{fig:selection} uses $\delta=0.6$, and the efficiency experiment in Appendix~\ref{app:efficiency} uses $\delta=0$.

\textbf{Sampling and evaluation.} Each repetition contains $n$ independent logged observations and $M$ independent auxiliary behavior-policy draws per logged context. The main experiments use $n=4{,}096$, $300$ repetitions, and the candidate grid $\mathcal N=\{1,2,4,\ldots,256\}$, with $M\geq N_{\max}$, satisfying the shared-pool requirement $M\geq N_{\max}-1$. Sample-size and auxiliary-pool sweeps vary the corresponding quantity. Shared random streams pair the data across estimators and error settings.

We compute the population policy value exactly:
\[
V_N(\gamma)=\frac12\sum_{x\in\{-1,+1\}}\sum_{a\in\mathcal A}
\pi_N^{s_\gamma}(a\mid x)\mu(x,a).
\]
The oracle budget is the smallest maximizer of $V_N(\gamma)$ over $\mathcal N$. Population values and infinite-pool variance benchmarks are used only for evaluation. The estimators receive sampled observations, score evaluations, and reward function predictions. Experimental regimes were fixed using population quantities before running the repetitions.

\subsection{GSM8K environment}
\label{app:gsm8k-details}

GSM8K provides $7{,}473$ training and $1{,}319$ test questions. We split the training questions deterministically into $3{,}500$ prompts for fitting the reward function and $3{,}973$ validation prompts for off-policy evaluation and selection, and reserve the official test set for the exact evaluation of selected policies. For each prompt we store $512$ independently sampled responses with their exact-match rewards and their scores under both reward models. We use $\mathcal N=\{1,2,4,\ldots,256\}$ as the candidate grid, with$ M\ge N_{\max}$.

Each validation repetition reveals the reward of one uniformly drawn response per prompt; the remaining responses of that prompt form the auxiliary pool used to estimate rank probabilities. This mimics the logged-data setting in which one response per prompt received feedback. We use $500$ logging repetitions for estimation and pointwise inference, and $100$ repetitions for policy selection.

The reward function is a cubic-spline calibration of the reward model score with six quantile knots, a logistic link, fixed $L_2$ regularization, and constant extrapolation beyond the observed score range. It is fit once on the $3{,}500$ training prompts and held fixed across repetitions, budgets, and selection rules. This is a deliberately favorable choice for the Plug-in estimator: the reward function is a calibration of the same score that defines the target policy, so any bias we report is not caused by an unrelated or weak nuisance. Appendix~\ref{app:nuisance-robustness} confirms the conclusion with two independently trained reward functions.

\subsection{Baseline estimators}
\label{app:baseline_estimators}

We consider two baseline estimators corresponding to the two components of the pooled BoN-DR estimator in Eq.~\eqref{eq:pooled-dr-estimator}: BoN-IPW and Plug-in. The Plug-in estimator retains only the plug-in term, whereas BoN-IPW uses the estimated BoN density ratio appearing in the correction term.

\textbf{Pooled BoN-IPW estimator.}
BoN-IPW uses the estimated BoN density ratio from the correction term of Eq.~\eqref{eq:pooled-dr-estimator}. We define the pooled BoN-IPW estimator as
\begin{align}
\hat V_{\mathrm{IPW}}^M(\pi_N^s) &= \frac{1}{n} \sum_{i=1}^n \hat w_{i,N}^{s,M}(x_i, \bar a_i)  y_i \label{eq:ipw-estimator}
\end{align}
Thus, BoN-IPW relies on the logged rewards together with the estimated BoN density ratio, without using the plug-in term of BoN-DR.

\textbf{Plug-in estimator.}
The Plug-in estimator corresponds directly to the plug-in term of the pooled BoN-DR estimator in Eq.~\eqref{eq:pooled-dr-estimator}. We define the Plug-in estimator as
\begin{align}
\hat V_{\mathrm{Plug-in}}^M(\pi_N^s) &= \frac{1}{n} \sum_{i=1}^n  \hat v_{i,N}^{s,M+1}(x_i, \bar a_i) \label{eq:plug-in-estimator}
\end{align}
The Plug-in estimator therefore evaluates the target BoN policy using the estimated reward function alone, without the correction based on the logged reward. Consequently, errors in the estimated reward function are not corrected by the Plug-in estimator.

\subsection{Confidence statements}
\label{app:inference}

For DR, $\hat\sigma_{N,M}$ is the empirical standard deviation of the scores $\psi_N^M(z_i,\tilde a_i)$; the other estimators use their corresponding scores. Pointwise confidence intervals use the Wald form $\hat V_{\mathrm{DR}}^M(\pi_N^s)\pm z_{1-\alpha/2}\hat\sigma_{N,M}/\sqrt{n}$. Simultaneous bounds over a candidate grid use the Gaussian multiplier bootstrap of Section~\ref{sec:select_N} with $2{,}000$ draws and $\alpha=0.05$, with the critical value recomputed for each grid prefix rather than reused from the full grid. Multiplier draws are shared across budgets so that the dependence between value estimates is preserved.

\subsection{Metric definitions}
\label{app:metrics}

Bias is the mean estimate minus the exact value; RMSE is the square root of the mean squared error; the empirical dispersion uses $\mathrm{ddof}=1$; interval width is the upper minus the lower limit. Coverage is the fraction of repetitions whose interval contains the exact value, counted once per repetition. Simultaneous coverage counts one event per repetition and grid, never one event per budget.

For selection, gain is the exact value of the selected policy minus the exact value at $N=1$. On GSM8K, estimation and coverage are evaluated against the exact benchmark induced by the stored validation responses, while selected budgets are evaluated independently on the held-out test set. The reported GSM8K harm rate is therefore an empirical analogue of the theoretical no-harm criterion. In the Synthetic experiments, selected policies are evaluated on the exact population. The harm rate is the fraction of repetitions whose gain is negative beyond a numerical tolerance of $10^{-12}$; it is not the fraction of individually harmed prompts. Exact ties in a selection objective resolve to the smallest budget. The Safe Improvement rule includes a zero bound at $N=1$, so it returns $N=1$ whenever no budget has a strictly positive improvement bound. Monte Carlo uncertainty is binomial for coverage and harm, and the between-repetition standard deviation divided by the square root of the number of repetitions for bias and gain. To keep the tables readable these standard errors are not printed; they are retained for every reported cell in the result files described in Appendix~\ref{app:reproducibility}. At $300$ repetitions a coverage of $0.95$ has a Monte Carlo standard error of $0.013$, and at $100$ repetitions a harm rate of $0.14$ has one of $0.035$.

\subsection{Reproducibility}
\label{app:reproducibility}

All numbers in this paper are computed from a single frozen result set, which is fixed before any figure or table is produced. It contains $5{,}100$ per-repetition records for the Synthetic experiments and $3{,}600$ for GSM8K, covering every combination of environment, estimator, budget, and selection rule reported here and in the appendix. Random number generation uses named streams seeded per repetition, so contexts, logged actions, rewards, and auxiliary draws are shared across estimators and across misspecification settings. Any two estimators are therefore compared on identical data, and differences between them are not attributable to sampling variation in the inputs.

\section{Additional results for off-policy evaluation}
\label{app:estimation-extra}

\subsection{Reward overoptimization under score function misspecification}
\label{app:syn_exp_score_func}

We further study how misspecification of the score function affects the induced BoN policy. In the Synthetic experiments, the parameter $\gamma$ increases the scores of a small set of low-quality actions that are rarely sampled under the behavior policy. As $\gamma$ increases, these actions are more likely to be selected by BoN, particularly at larger sampling budgets $N$. Consequently, the induced BoN policy places increasing mass on actions with low probability under the behavior policy, leading to weaker overlap and larger density ratios. Figure~\ref{fig:syn-regimes} illustrates the resulting reward overoptimization: while the expected selected score continues to increase with $N$, the true policy value can peak and subsequently decline when the score function is sufficiently misspecified. Throughout this experiment, we fix the reward function error at $\delta=0.4$ and vary only the score-function error $\gamma$.

\begin{figure}[H]
    \centering
    \vspace{-0.4cm}
    \includegraphics[width=0.9\linewidth]{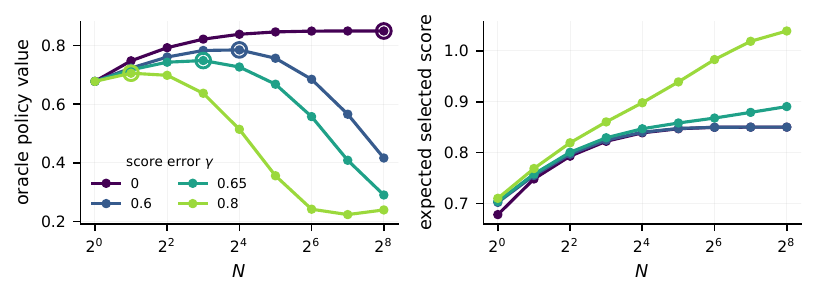}
    \vspace{-0.4cm}
    \caption{\textbf{Robust to score function misspecification} ($\delta = 0.4$). \textbf{Left:} exact policy value $V(\pi_N^s)$; circles show the optimal budget $N^\star$. \textbf{Right:} expected score of the selected action. Colors show the score error $\gamma$. }
    \label{fig:syn-regimes}
\end{figure}

\subsection{Sensitivity to validation-set size}
\label{app:validation_set}

To assess the sensitivity of downstream policy evaluation and budget selection to the amount of validation data, we vary the size of the logged validation set while keeping the remaining experimental setup fixed. Figure~\ref{fig:consistency} reports policy-value estimation at $N=256$ and Value Maximization selection over the candidate budget grid. As the validation size increases, the RMSE of BoN-DR and BoN-IPW decreases and their confidence-interval coverage approaches the nominal $95\%$ level, whereas the bias of the Plug-in estimator persists. The improved estimation accuracy also translates into more reliable budget selection: the probability of selecting the oracle-optimal budget $N^\star$ increases with the validation size, with BoN-DR reaching high selection accuracy more quickly than BoN-IPW. Overall, the results show that the downstream evaluation and selection procedure becomes increasingly stable as more logged validation data are available.

\begin{figure}[H]
    \centering
    \vspace{-0.6cm}
    \includegraphics[width=\linewidth]{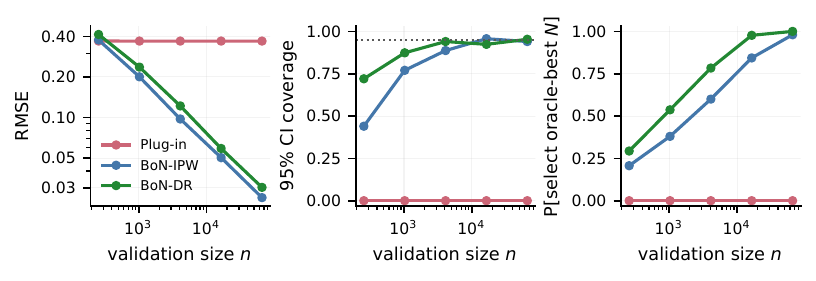}
    \vspace{-0.8cm}
    \caption{
    \textbf{
    Estimated BoN policy value at $N = 256$ as logged sample size grows.}
    \textbf{(a)} RMSE at log scale.
    \textbf{(b)} Coverage of the nominal $95\%$ pointwise confidence interval. The dotted line marks the nominal level.
    \textbf{(c)} Probability of selecting the optimal budget $N^\star$ under Value Maximization selection.}
    \label{fig:consistency}
\end{figure}

\subsection{Finite-$M$ efficiency}
\label{app:efficiency}

Figure~\ref{fig:efficiency} isolates the cost of the finite auxiliary pool in the Synthetic experiments with a correctly specified reward function ($\delta=0$). The paired ratio between the finite-$M$ and the infinite-$M$ DR variance falls from $1.335$ at $M=256$ to $1.019$ at $M=4096$ at $N=256$, so the Monte Carlo cost of the pool vanishes as $M$ grows, as Theorem~\ref{thm:dr-asymptotics} (b) predicts. The right panel compares the realized variance with the semiparametric efficiency bound: DR attains the bound closely, whereas IPW exceeds it by up to a factor of $4.9$.

\begin{figure}[H]
    \centering
    \includegraphics[width=\textwidth]{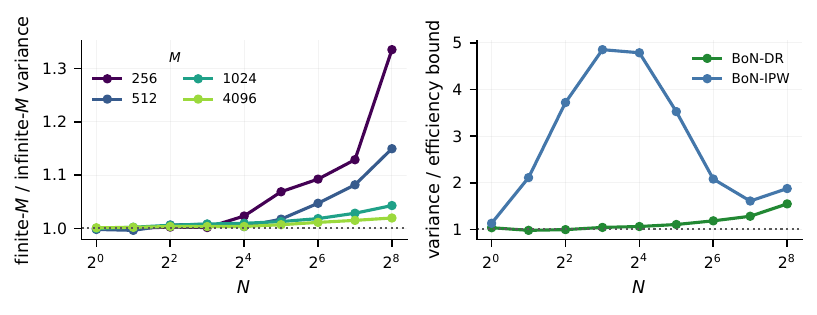}
    \caption{\textbf{Finite-$M$ cost and efficiency in the synthetic experiments.} Correctly specified reward function ($\delta=0$), $\gamma=0.65$, $n=4{,}096$, $300$ repetitions. \textbf{Left:} ratio of the finite-$M$ DR variance to the paired infinite-$M$ oracle variance. \textbf{Right:} ratio of the realized variance to the efficiency bound at $M=256$. The dotted line marks a ratio of one.}
    \label{fig:efficiency}
\end{figure}

\subsection{Real coverage}
\label{app:real-coverage}

Figure~\ref{fig:real-coverage} examines interval calibration over the full budget grid on GSM8K. Across both OASST and Skywork, BoN-DR and BoN-IPW achieve coverage close to the nominal $95\%$ level over most budgets, despite the different BoN policies induced by the two score functions. BoN-DR coverage deteriorates at the largest budgets, where rank-weight overlap becomes weak, while BoN-IPW remains better calibrated at these endpoints. This undercoverage is consistent with a finite-sample limitation under increasingly extreme BoN reweighting. In contrast, Plug-in coverage drops to zero from $N=4$ onward in all six configurations. Overall, the results suggest that BoN-DR and BoN-IPW provide reasonably well-calibrated uncertainty quantification across both score-function regimes, with the main deviations occurring at the largest budgets.

\begin{figure}[H]
    \centering
    \includegraphics[width=\linewidth]{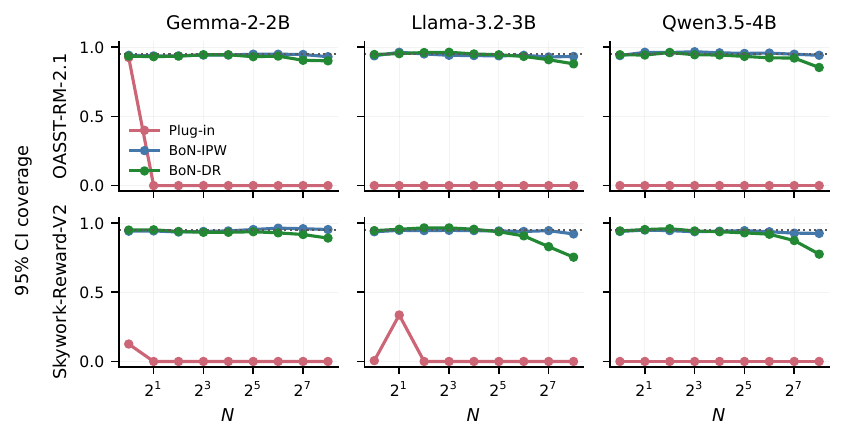}
    \vspace{-0.8cm}
    \caption{\textbf{Interval validity across budgets on GSM8K.} Empirical coverage of the nominal $95\%$ pointwise confidence interval over $500$ repetitions. The dotted line marks the nominal level. Rows are external reward models used as score functions, and columns are reference models. The Plug-in estimator never covers from $N=4$ on. DR declines at the largest budgets, where overlap deteriorates.}
    \label{fig:real-coverage}
\end{figure}

\subsection{Error decomposition over the budget grid}
\label{app:error-decomposition}

Figure~\ref{fig:bias-variance} decomposes the estimation error on GSM8K across all budgets, which a single-budget table cannot show. The Plug-in bias is large and nearly constant in $N$, whereas DR and IPW are essentially unbiased at every budget and their dispersion grows as overlap deteriorates. The Plug-in estimator therefore becomes competitive in RMSE only at the largest budgets, and only by trading a constant bias against negligible variance. Table~\ref{tab:real-estimation} gives the per-configuration numbers at $N=256$.

\begin{figure}[H]
    \centering
    \includegraphics[width=\textwidth]{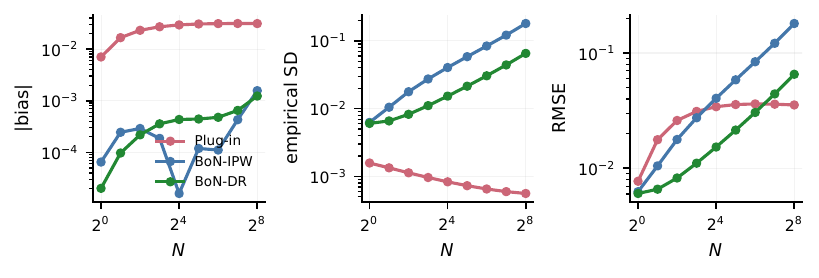}
    \caption{\textbf{Absolute bias, dispersion, and RMSE on GSM8K over the budget grid.} Each quantity is averaged over the six configurations, $500$ repetitions, log scale.}
    \label{fig:bias-variance}
\end{figure}

\begin{table}[H]
    \centering
    \caption{\textbf{Estimation on GSM8K at $N=256$.} $500$ repetitions with $n=3{,}973$ logged prompts. Coverage is that of the nominal $95\%$ pointwise confidence interval. \textbf{Bold} marks the lowest RMSE and the coverage closest to $0.95$ within each configuration.}
    \label{tab:real-estimation}
    \small
    \begin{tabular}{llrrrr}
\toprule
Reward model & Estimator & Bias & RMSE & CI width & Coverage \\
\midrule
\multicolumn{6}{l}{\emph{OASST-RM-2.1, reference model Gemma-2-2B}} \\
 & Plug-in & 0.097 & \textbf{0.097} & 0.002 & 0.000 \\
 & BoN-IPW & 0.002 & 0.156 & 0.625 & \textbf{0.932} \\
 & BoN-DR & 0.008 & 0.102 & 0.364 & 0.902 \\
\midrule
\multicolumn{6}{l}{\emph{OASST-RM-2.1, reference model Llama-3.2-3B}} \\
 & Plug-in & 0.029 & \textbf{0.029} & 0.001 & 0.000 \\
 & BoN-IPW & 0.005 & 0.179 & 0.703 & \textbf{0.934} \\
 & BoN-DR & 0.003 & 0.079 & 0.280 & 0.880 \\
\midrule
\multicolumn{6}{l}{\emph{OASST-RM-2.1, reference model Qwen3.5-4B}} \\
 & Plug-in & 0.018 & \textbf{0.018} & 0.001 & 0.000 \\
 & BoN-IPW & 0.008 & 0.187 & 0.726 & \textbf{0.942} \\
 & BoN-DR & 0.003 & 0.058 & 0.225 & 0.854 \\
\midrule
\multicolumn{6}{l}{\emph{Skywork-Reward-V2, reference model Gemma-2-2B}} \\
 & Plug-in & 0.041 & \textbf{0.041} & 0.007 & 0.000 \\
 & BoN-IPW & 0.004 & 0.172 & 0.704 & \textbf{0.954} \\
 & BoN-DR & -0.003 & 0.064 & 0.252 & 0.892 \\
\midrule
\multicolumn{6}{l}{\emph{Skywork-Reward-V2, reference model Llama-3.2-3B}} \\
 & Plug-in & -0.013 & \textbf{0.013} & 0.002 & 0.000 \\
 & BoN-IPW & -0.010 & 0.198 & 0.737 & \textbf{0.922} \\
 & BoN-DR & -0.002 & 0.040 & 0.134 & 0.754 \\
\midrule
\multicolumn{6}{l}{\emph{Skywork-Reward-V2, reference model Qwen3.5-4B}} \\
 & Plug-in & 0.015 & \textbf{0.015} & 0.001 & 0.000 \\
 & BoN-IPW & 0.001 & 0.190 & 0.727 & \textbf{0.926} \\
 & BoN-DR & -0.002 & 0.048 & 0.168 & 0.776 \\
\bottomrule
\end{tabular}

\end{table}

\subsection{Overlap diagnostics}
\label{app:overlap}
\label{app:stress-test}

Figure~\ref{fig:overlap} reports the effective sample size of the rank weights on GSM8K. It falls from $3{,}973$ prompts at $N=1$ to about $25$ at $N=256$ and about $10$ at $N=512$, where only $0.5\%$ of prompts carry non-zero weight. The endpoint $N=512$ uses all $M+1=512$ stored responses and satisfies the boundary condition $M=N-1$ in Theorem~\ref{thm:dr-asymptotics}. We report it only as an overlap stress test and restrict the main text to $N\leq256$. Extending the grid to $N_{\max}=512$ roughly doubles the residual harm of DR under Safe Improvement selection, from $0.12$ to $0.24$ on Llama with OASST and from $0.14$ to $0.33$ on Qwen with OASST, while IPW is unaffected at $0.00$ and $0.01$.

\begin{figure}[H]
    \centering
    \includegraphics[width=\textwidth]{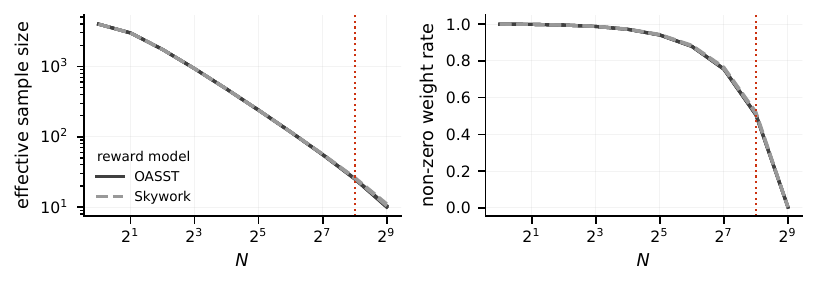}
    \caption{\textbf{Overlap of the rank weights on GSM8K.} \textbf{Left:} effective sample size, log scale, out of $3{,}973$ validation prompts. \textbf{Right:} fraction of prompts receiving non-zero weight. Solid lines are OASST, dashed lines are Skywork; the vertical line marks the largest budget used in the main text.}
    \label{fig:overlap}
\end{figure}

\subsection{Robustness to the reward function}
\label{app:nuisance-robustness}

Table~\ref{tab:nuisance-robustness} repeats the GSM8K comparison with two reward functions trained independently of the score function, a TF--IDF logistic regression and an embedding-based LightGBM model. Plug-in coverage is zero for both, so the failure reported in \rqtwo is a property of the estimator under misspecification and not an artifact of the score-calibration spline used in the main text.

\begin{table}[H]
    \centering
    \caption{\textbf{Estimation on GSM8K at $N=256$ with independently trained reward functions.} OASST score function. Coverage is that of the nominal $95\%$ pointwise confidence interval. \textbf{Bold} marks the lowest RMSE within each reference model and reward function.}
    \label{tab:nuisance-robustness}
    \small
    \begin{tabular}{llrrr}
\toprule
Reward estimator & Estimator & Bias & RMSE & Coverage \\
\midrule
\multicolumn{5}{l}{\emph{Reference model Gemma-2-2B}} \\
Embedding LightGBM & Plug-in & 0.040 & \textbf{0.040} & 0.000 \\
 & BoN-IPW & 0.008 & 0.167 & 0.880 \\
 & BoN-DR & -0.015 & 0.093 & 0.970 \\
TF--IDF logistic & Plug-in & 0.022 & \textbf{0.022} & 0.000 \\
 & BoN-IPW & 0.008 & 0.167 & 0.880 \\
 & BoN-DR & -0.011 & 0.101 & 0.940 \\
\midrule
\multicolumn{5}{l}{\emph{Reference model Llama-3.2-3B}} \\
Embedding LightGBM & Plug-in & 0.023 & \textbf{0.023} & 0.000 \\
 & BoN-IPW & -0.011 & 0.194 & 0.920 \\
 & BoN-DR & 0.005 & 0.090 & 0.800 \\
TF--IDF logistic & Plug-in & 0.024 & \textbf{0.024} & 0.000 \\
 & BoN-IPW & -0.011 & 0.194 & 0.920 \\
 & BoN-DR & 0.006 & 0.084 & 0.820 \\
\midrule
\multicolumn{5}{l}{\emph{Reference model Qwen3.5-4B}} \\
Embedding LightGBM & Plug-in & 0.018 & \textbf{0.018} & 0.000 \\
 & BoN-IPW & 0.013 & 0.185 & 0.940 \\
 & BoN-DR & 0.003 & 0.061 & 0.830 \\
TF--IDF logistic & Plug-in & 0.014 & \textbf{0.014} & 0.000 \\
 & BoN-IPW & 0.013 & 0.185 & 0.940 \\
 & BoN-DR & 0.002 & 0.058 & 0.830 \\
\bottomrule
\end{tabular}

\end{table}

\section{Additional results for selection}
\label{app:selection-extra}

Table~\ref{tab:selection-challenging} gives the numerical results and median selected budgets for Figure~\ref{fig:selection}. The following subsections report the complete comparisons across configurations.

\begin{table}[H]
    \centering
    \caption{\textbf{Selecting $N$ under challenging conditions.} \textit{Synthetic experiments:} score error $\gamma=0.65$, reward function error $\delta=0.6$, $n=4{,}096$, $300$ repetitions; gain in policy value. \textit{GSM8K:} Qwen3.5-4B with OASST, $n=3{,}973$, $100$ logging repetitions; gain in accuracy points. Gain is measured against $N=1$ and higher is better; harm is the fraction of repetitions with negative gain and lower is better; $\widetilde{N}$ is the median selected budget. $N_{\max}=256$. \textbf{Bold} marks the best value per column among the selection methods. The last row gives the best attainable gain on the grid.}
    \label{tab:selection-challenging}
    \small
    \begin{tabular}{lrrrrrr}
\toprule
& \multicolumn{3}{c}{Synthetic exp (misspecified)} & \multicolumn{3}{c}{GSM8K: Qwen3.5-4B / OASST} \\
\cmidrule(lr){2-4}\cmidrule(lr){5-7}
Selection method & Gain & Harm & $\widetilde{N}$ & Gain (pp) & Harm & $\widetilde{N}$ \\
\midrule
Naive BoN & -0.388 & 1.00 & 256 & -1.50 & 1.00 & 256 \\
Plug-in & -0.380 & 1.00 & 256 & -0.28 & 1.00 & 4 \\
BoN-IPW, Value-max & 0.068 & \textbf{0.00} & 8 & -0.84 & 0.77 & 32 \\
BoN-IPW, Safe-improvement & 0.062 & \textbf{0.00} & 4 & \textbf{0.00} & \textbf{0.01} & 1 \\
BoN-DR, Value-max & \textbf{0.069} & \textbf{0.00} & 8 & -0.82 & 0.73 & 64 \\
BoN-DR, Safe-improvement & 0.067 & \textbf{0.00} & 4 & -0.19 & 0.14 & 1 \\
\midrule
\textit{Oracle-best budget} & \textit{0.071} & \textit{0.00} & \textit{8} & \textit{0.00} & \textit{0.00} & \textit{1} \\
\bottomrule
\end{tabular}

\end{table}

\subsection{Safe Improvement reduces harmful budget choices.}

Figure~\ref{fig:selection} expands the selection analysis in Figure~\ref{fig:selection_full} by reporting both the achieved gain and the harm rate in two challenging regimes. The Synthetic experiments with $\gamma=0.65$ and $\delta=0.6$ has an interior optimal budget, $N^\star=8$, so successful selection must exploit the benefit of increasing $N$ without overoptimizing at larger budgets. In contrast, for Qwen3.5-4B with OASST on GSM8K, $N^\star=1$, so any departure from the reference policy is harmful. These two regimes therefore separate the ability to capture a beneficial nontrivial budget from the ability to avoid unnecessary increases in inference-time compute.

\begin{figure}[H]
    \centering
    \includegraphics[width=\textwidth]{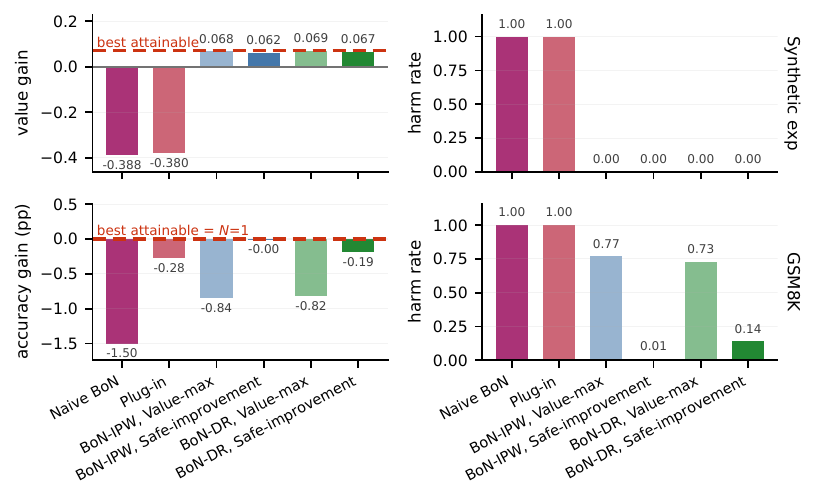}
    \caption{\textbf{Safe Improvement reduces harmful budget choices.} \textbf{Top:} Synthetic experiments with $\gamma=0.65$, $\delta=0.6$, $n=4{,}096$, and $300$ repetitions. \textbf{Bottom:} Qwen3.5-4B with OASST on GSM8K, with $3{,}973$ validation prompts and $100$ logging repetitions. \textbf{Left:} mean gain over $N=1$, measured in policy value for the Synthetic experiments and accuracy percentage points for GSM8K. Higher is better. The dashed red line marks the best attainable gain on the grid. \textbf{Right:} harm rate. Lower is better. Light and solid bars show Value-max and Safe-improvement, respectively. Plug-in uses Value Maximization, and Naive BoN selects $N_{\max}=256$. Labels report numerical values.}
    \label{fig:selection_full}
\end{figure}

In the Synthetic experiments, Value Maximization selection with BoN-IPW and BoN-DR recovers nearly all of the attainable gain and incurs no observed harm, whereas Naive BoN and Plug-in are harmful in every repetition. Safe Improvement is slightly more conservative, but retains most of the gain while likewise avoiding harmful selections. Thus, when the policy-value estimates are sufficiently accurate, both direct value maximization and uncertainty-aware selection identify useful nontrivial budgets.

The GSM8K regime is more challenging because the reference policy itself is optimal, so the best attainable gain over $N=1$ is zero. Value Maximization selection nevertheless frequently chooses larger, harmful budgets: BoN-IPW and BoN-DR lose $0.84$ and $0.82$ accuracy percentage points on average, with harm rates of $77\%$ and $73\%$, respectively. Safe Improvement substantially reduces these losses and lowers the harm rates to $1\%$ for BoN-IPW and $14\%$ for BoN-DR. The remaining BoN-DR harm is consistent with its finite-sample undercoverage at the largest budgets discussed in Appendix~\ref{app:real-coverage}. Thus, accounting for uncertainty is particularly useful when maximization over noisy value estimates would otherwise favor unnecessarily large budgets.

\subsection{Complete results on GSM8K}
\label{app:rule-matrix-gsm8k}

Table~\ref{tab:rule-matrix-gsm8k} reports every estimator with Value Maximization and Safe Improvement for all six configurations, including the aligned ones omitted from \rqthree. Two patterns hold throughout. Under the aligned Skywork reward model, no rule incurs harm, and Safe Improvement reduces gain without reducing the observed harm rate, most visibly for IPW. Under OASST, Safe Improvement reduces the harm incurred by Value Maximization selection with DR and IPW. Residual harm remains for DR on Llama and Qwen.

\begin{table}[H]
    \centering
    \caption{\textbf{Complete selection results on GSM8K.} $N_{\max}=256$, $100$ logging repetitions. Gain is measured in accuracy points against $N=1$ and higher is better; harm is the fraction of repetitions with negative gain and lower is better. Naive BoN does not depend on the selection rule and is reported once per configuration.}
    \label{tab:rule-matrix-gsm8k}
    \scriptsize
\begin{tabular}{rrl rr rr rr}
\toprule
\multicolumn{2}{c}{Naive BoN} & & \multicolumn{2}{c}{Plug-in} & \multicolumn{2}{c}{BoN-IPW} & \multicolumn{2}{c}{BoN-DR} \\
\cmidrule(lr){1-2}\cmidrule(lr){4-5}\cmidrule(lr){6-7}\cmidrule(lr){8-9}
Gain & Harm & Rule & Gain & Harm & Gain & Harm & Gain & Harm \\
\midrule
\multicolumn{9}{l}{\emph{OASST-RM-2.1, reference model Gemma-2-2B}} \\
\multirow{2}{*}{4.23} & \multirow{2}{*}{0.00} & Value-max & 4.84 & 0.00 & 4.45 & 0.00 & 4.51 & 0.00 \\
 &  & Safe-improvement & 4.84 & 0.00 & 3.25 & 0.00 & 3.98 & 0.00 \\
\midrule
\multicolumn{9}{l}{\emph{OASST-RM-2.1, reference model Llama-3.2-3B}} \\
\multirow{2}{*}{-0.51} & \multirow{2}{*}{1.00} & Value-max & 0.33 & 0.00 & -0.03 & 0.44 & -0.09 & 0.53 \\
 &  & Safe-improvement & 0.33 & 0.00 & 0.01 & 0.00 & -0.05 & 0.12 \\
\midrule
\multicolumn{9}{l}{\emph{OASST-RM-2.1, reference model Qwen3.5-4B}} \\
\multirow{2}{*}{-1.50} & \multirow{2}{*}{1.00} & Value-max & -0.28 & 1.00 & -0.84 & 0.77 & -0.82 & 0.73 \\
 &  & Safe-improvement & -0.28 & 1.00 & 0.00 & 0.01 & -0.19 & 0.14 \\
\midrule
\multicolumn{9}{l}{\emph{Skywork-Reward-V2, reference model Gemma-2-2B}} \\
\multirow{2}{*}{23.71} & \multirow{2}{*}{0.00} & Value-max & 23.71 & 0.00 & 21.86 & 0.00 & 22.91 & 0.00 \\
 &  & Safe-improvement & 23.71 & 0.00 & 16.17 & 0.00 & 20.57 & 0.00 \\
\midrule
\multicolumn{9}{l}{\emph{Skywork-Reward-V2, reference model Llama-3.2-3B}} \\
\multirow{2}{*}{15.92} & \multirow{2}{*}{0.00} & Value-max & 15.92 & 0.00 & 13.95 & 0.00 & 15.07 & 0.00 \\
 &  & Safe-improvement & 15.92 & 0.00 & 8.86 & 0.00 & 14.12 & 0.00 \\
\midrule
\multicolumn{9}{l}{\emph{Skywork-Reward-V2, reference model Qwen3.5-4B}} \\
\multirow{2}{*}{3.11} & \multirow{2}{*}{0.00} & Value-max & 3.11 & 0.00 & 2.57 & 0.00 & 2.91 & 0.00 \\
 &  & Safe-improvement & 3.11 & 0.00 & 0.38 & 0.00 & 2.23 & 0.00 \\
\bottomrule
\end{tabular}

\end{table}

\subsection{Complete results in the Synthetic experiments}
\label{app:rule-matrix-controlled}

Table~\ref{tab:rule-matrix-controlled} sweeps the reward function error at a fixed overoptimized score function. It shows why \rqone reports $\delta=0.6$: all estimator-based selection rules have zero observed harm up to $\delta=0.4$, because the Plug-in bias is not yet large enough to move its selected budget past the point at which the policy value falls below the value at $N=1$. At $\delta=0.6$ the Plug-in budget jumps to $N_{\max}$ and harms in every repetition, while DR and IPW retain positive gains with zero observed harm across the tested values of $\delta$. This also shows that a biased estimator can still select a safe budget, which is why estimation quality and selection quality are evaluated separately in \rqone.

\begin{table}[H]
    \centering
    \caption{\textbf{Complete selection results in the Synthetic experiments.} Score error $\gamma=0.65$, $n=4{,}096$, $N_{\max}=256$, $300$ repetitions. Gain is measured in policy value against $N=1$ and higher is better; harm is the fraction of repetitions with negative gain and lower is better. Naive BoN does not depend on the selection rule and is reported once per value of $\delta$.}
    \label{tab:rule-matrix-controlled}
    \scriptsize
\begin{tabular}{rrl rr rr rr}
\toprule
\multicolumn{2}{c}{Naive BoN} & & \multicolumn{2}{c}{Plug-in} & \multicolumn{2}{c}{BoN-IPW} & \multicolumn{2}{c}{BoN-DR} \\
\cmidrule(lr){1-2}\cmidrule(lr){4-5}\cmidrule(lr){6-7}\cmidrule(lr){8-9}
Gain & Harm & Rule & Gain & Harm & Gain & Harm & Gain & Harm \\
\midrule
\multicolumn{9}{l}{\emph{Reward estimator error $\delta=-0.2$}} \\
\multirow{2}{*}{-0.388} & \multirow{2}{*}{1.00} & Value-max & 0.065 & 0.00 & 0.068 & 0.00 & 0.070 & 0.00 \\
 &  & Safe-improvement & 0.065 & 0.00 & 0.062 & 0.00 & 0.067 & 0.00 \\
\midrule
\multicolumn{9}{l}{\emph{Reward estimator error $\delta=0$}} \\
\multirow{2}{*}{-0.388} & \multirow{2}{*}{1.00} & Value-max & 0.071 & 0.00 & 0.068 & 0.00 & 0.070 & 0.00 \\
 &  & Safe-improvement & 0.071 & 0.00 & 0.062 & 0.00 & 0.067 & 0.00 \\
\midrule
\multicolumn{9}{l}{\emph{Reward estimator error $\delta=0.2$}} \\
\multirow{2}{*}{-0.388} & \multirow{2}{*}{1.00} & Value-max & 0.071 & 0.00 & 0.068 & 0.00 & 0.070 & 0.00 \\
 &  & Safe-improvement & 0.071 & 0.00 & 0.062 & 0.00 & 0.067 & 0.00 \\
\midrule
\multicolumn{9}{l}{\emph{Reward estimator error $\delta=0.4$}} \\
\multirow{2}{*}{-0.388} & \multirow{2}{*}{1.00} & Value-max & 0.049 & 0.00 & 0.068 & 0.00 & 0.070 & 0.00 \\
 &  & Safe-improvement & 0.049 & 0.00 & 0.062 & 0.00 & 0.067 & 0.00 \\
\midrule
\multicolumn{9}{l}{\emph{Reward estimator error $\delta=0.6$}} \\
\multirow{2}{*}{-0.388} & \multirow{2}{*}{1.00} & Value-max & -0.380 & 1.00 & 0.068 & 0.00 & 0.069 & 0.00 \\
 &  & Safe-improvement & -0.319 & 1.00 & 0.062 & 0.00 & 0.067 & 0.00 \\
\bottomrule
\end{tabular}

\end{table}

\subsection{Selection across the range of score error}
\label{app:score-sweep}

Figure~\ref{fig:syn-score-selection} varies the score error at a fixed reward function. Naive BoN is optimal while the score function is aligned and collapses as soon as it is exploitable, whereas every estimator-based rule has nonnegative mean gain across the tested range. DR attains the lowest regret against the optimal budget.

\begin{figure}[H]
    \centering
    \includegraphics[width=\textwidth]{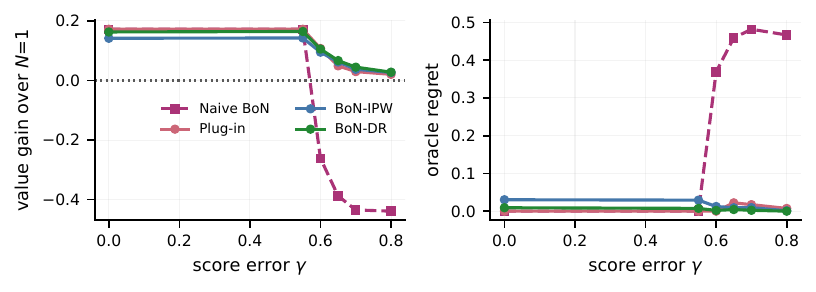}
    \caption{\textbf{Selection across the range of score error in the Synthetic experiments.} Safe Improvement rule, $\delta=0.4$, $n=4{,}096$, $N_{\max}=256$, $300$ repetitions. \textbf{Left:} gain over $N=1$. \textbf{Right:} regret against the optimal budget on the grid, lower is better.}
    \label{fig:syn-score-selection}
\end{figure}

\section{Extension to preference rewards}
\label{sec:alignment_extension}

Our BoN evaluation framework extends to settings in which the scalar reward encodes a pairwise preference between two model responses, as in AI alignment and RLHF. For a prompt or context $X$, let
\[
    A = (A_1,A_2)
\]
denote a pair of candidate responses and let
\[
    Y \in \{0,1\}
\]
be the observed reward indicating whether the first response is preferred to the second response. Thus,
$Y=1$ means that $A_1$ is preferred to $A_2$.

We assume that the behavior comparison policy factorizes as
\[
    \pi_b(a_1,a_2\mid x)
    =
    \pi_{b,1}(a_1\mid x)\pi_{b,2}(a_2\mid x).
\]
Here, $\pi_{b,1}$ denotes the policy from which the first candidate response is
sampled, and $\pi_{b,2}$ denotes the policy from which the second, reference
candidate response is sampled. In an RLHF application, $\pi_{b,1}$ could be the
base, supervised fine-tuned, or previous model from which BoN samples are
drawn. The policy $\pi_{b,2}$ is the opponent or reference policy used in the
comparison. Often, the two policies are the same, so that
\[
    \pi_{b,1} = \pi_{b,2} = \pi_{\mathrm{ref}},
\]
corresponding to the common design in which both candidate responses are sampled
independently from the same reference model. More generally, however, the two
marginals may differ, for example if the first response is sampled from a current
model and the second response from a fixed baseline model.

Let $s(x,a)$ denote the score function used for BoN selection. Given a prompt $x$,
the BoN policy induced by $\pi_{b,1}$ samples
\[
    \widetilde A_1,\ldots,\widetilde A_N
    \overset{\mathrm{iid}}{\sim}
    \pi_{b,1}(\cdot\mid x)
\]
and returns
\[
    \begin{gathered}
    J_N(x):=\argmax_{1\leq j\leq N}s(x,\widetilde A_j),\\
    I_N(x)\mid\widetilde A_1,\ldots,\widetilde A_N
    \sim\operatorname{Uniform}(J_N(x)),
    \qquad A_N^\star:=\widetilde A_{I_N(x)}.
    \end{gathered}
\]
Ties are thus broken uniformly among maximizing sampled occurrences, including repeated responses. We denote the resulting policy by $q_N(\cdot\mid x)$.

\subsection{Pairwise Win-Rate Objective}

In the preference-learning setting, a natural target estimand is the pairwise
win rate of the BoN policy against a reference policy $\rho$:
\[
    W_N(\rho)
    =
    \mathbb{E}_X
    \mathbb{E}_{A\sim q_N(\cdot\mid X),\,B\sim \rho(\cdot\mid X)}
    \left[
        \mu(X,A,B)
    \right],
\]
where
\[
    \mu(x,a,b)
    =
    \mathbb{P}(Y=1\mid X=x,A_1=a,A_2=b)
\]
is the conditional mean reward function, here the probability that response $a$ is preferred to response $b$.

The reference policy $\rho$ specifies the opponent against which the BoN
policy is evaluated. A particularly convenient and practically relevant choice is
\[
    \rho = \pi_{b,2}.
\]
In this case, the target asks how often a BoN response sampled from the
first behavior marginal beats a response sampled from the same reference
distribution that generated the second response in the logged preference data.
When
\[
    \pi_{b,1}=\pi_{b,2}=\rho=\pi_{\mathrm{ref}},
\]
the estimand is the probability that a BoN response beats an ordinary
response sampled from the same reference model.

A common modeling assumption in preference learning is the Bradley--Terry model,
under which there exists a latent reward function $r_0(x,a)$ such that
\[
    \mu(x,a,b)
    =
    \sigma\{r_0(x,a)-r_0(x,b)\},
\]
where $\sigma(u)=(1+\exp(-u))^{-1}$ is the logistic function. Given a sample
$\{(X_i,A_{i1},A_{i2},Y_i)\}_{i=1}^n$, a reward model $r_\theta$ can be estimated
by minimizing the negative log-likelihood
\[
\begin{aligned}
    \mathcal{L}_{\mathrm{BT}}(\theta)
    ={}&
    -
    \frac{1}{n}
    \sum_{i=1}^n
    \Bigl[
        Y_i
        \log
        \sigma\{r_\theta(X_i,A_{i1})-r_\theta(X_i,A_{i2})\}
        \\
        &\qquad
        +
        (1-Y_i)
        \log
        \sigma\{r_\theta(X_i,A_{i2})-r_\theta(X_i,A_{i1})\}
    \Bigr].
\end{aligned}
\]
The corresponding reward function is
\[
    \widehat\mu(x,a,b)
    =
    \sigma\{\widehat r(x,a)-\widehat r(x,b)\}.
\]
As in Section~\ref{sec:theory}, fitted functions are constructed independently of the evaluation data and auxiliary draws. The Bradley--Terry model is useful because it connects pairwise preferences to
the reward modeling perspective common in RLHF. However, the pairwise win-rate
estimand does not require the Bradley--Terry model for identification. It is
identified as long as the conditional preference regression
$\mu(x,a,b)$ is identified on the relevant support.

\subsection{Doubly Robust Estimation}

The target pair distribution for the win-rate estimand is
\[
    q_N^{\mathrm{pair}}(a,b\mid x)
    =
    q_N(a\mid x)\rho(b\mid x),
\]
whereas the behavior pair distribution is
\[
    \pi_b(a,b\mid x)
    =
    \pi_{b,1}(a\mid x)\pi_{b,2}(b\mid x).
\]
Therefore, the pairwise density ratio is
\[
    w_N(x,a,b)
    =
    \frac{
        q_N(a\mid x)\rho(b\mid x)
    }{
        \pi_{b,1}(a\mid x)\pi_{b,2}(b\mid x)
    }.
\]

Define
\[
    m_N(x;\rho)
    =
    \mathbb{E}_{A\sim q_N(\cdot\mid x),\,B\sim \rho(\cdot\mid x)}
    \left[
        \mu(x,A,B)
    \right].
\]
Then the efficient influence function for $W_N(\rho)$ in the nonparametric model
for $\mu$ is
\[
    \phi_N(O)
    =
    m_N(X;\rho)
    -
    W_N(\rho)
    +
    w_N(X,A_1,A_2)
    \{Y-\mu(X,A_1,A_2)\}.
\]
This yields the doubly robust estimator
\[
    \widehat W_N(\rho)
    =
    \frac{1}{n}
    \sum_{i=1}^n
    \left[
        \widehat m_N(X_i;\rho)
        +
        \widehat w_N(X_i,A_{i1},A_{i2})
        \{Y_i-\widehat\mu(X_i,A_{i1},A_{i2})\}
    \right].
\]
The plug-in component can be estimated by Monte Carlo:
\[
    \widehat m_N(x;\rho)
    =
    \frac{1}{M}
    \sum_{m=1}^M
    \widehat\mu(x,\widetilde A_{m,N}^\star,\widetilde B_m),
\]
where
\[
    \widetilde A_{m,N}^\star=\widetilde A_{m,I_{m,N}},
    \qquad
    \widetilde A_{mj}
    \overset{\mathrm{iid}}{\sim}
    \pi_{b,1}(\cdot\mid x),
    \qquad
    \widetilde B_m
    \sim
    \rho(\cdot\mid x).
\]

Here, conditional on the sampled candidates, $I_{m,N}$ is drawn uniformly from $\argmax_{1\leq j\leq N}s(x,\widetilde A_{mj})$. The $M$ terms above are independent Monte Carlo replicates, each using $N$ candidate draws and an independent opponent draw; this is a direct Monte Carlo construction rather than the pooled estimator of Section~\ref{sec:bon_dr_estimator}.

\subsection{BoN density ratios}

The density ratio can be simplified using the structure of the BoN
policy. Suppose $q_N$ is induced by sampling $N$ candidates independently from
$\pi_{b,1}(\cdot\mid x)$ and selecting the response with the largest score
$s(x,a)$. Define
\[
\begin{aligned}
    F_{x,1}^-(a)&=\prob_{A\sim\pi_{b,1}(\cdot\mid x)}\{s(x,A)<s(x,a)\},\\
    F_{x,1}(a)&=\prob_{A\sim\pi_{b,1}(\cdot\mid x)}\{s(x,A)\leq s(x,a)\}.
\end{aligned}
\]
Proposition~\ref{prop:bon-dr-score} gives, for $\pi_{b,1}(a\mid x)>0$,
\[
    \frac{q_N(a\mid x)}{\pi_{b,1}(a\mid x)}
    =
    g_N\!\left(F_{x,1}(a),F_{x,1}^-(a)\right).
\]
Thus,
\[
    w_N(x,a,b)
    =
    g_N\!\left(F_{x,1}(a),F_{x,1}^-(a)\right)
    \cdot
    \frac{\rho(b\mid x)}{\pi_{b,2}(b\mid x)}.
\]

The strict and weak score probabilities can be estimated by Monte Carlo sampling from
the first behavior marginal. Given samples
\[
    \bar A_1,\ldots,\bar A_L
    \overset{\mathrm{iid}}{\sim}
    \pi_{b,1}(\cdot\mid x),
\]
we estimate
\[
\begin{aligned}
    \widehat F_{x,1}^-(a)&=\frac1L\sum_{\ell=1}^L\1\{s(x,\bar A_\ell)<s(x,a)\},\\
    \widehat F_{x,1}(a)&=\frac1L\sum_{\ell=1}^L\1\{s(x,\bar A_\ell)\leq s(x,a)\}.
\end{aligned}
\]
This gives the plug-in estimate of the BoN ratio
\[
    \widehat R_N(x,a)=g_N\!\left(\widehat F_{x,1}(a),\widehat F_{x,1}^-(a)\right).
\]
This plug-in ratio estimate is generally biased at finite $L$. The unbiased alternative is Eq.~\eqref{eq:weight-pooled}, applied to the first behavior marginal with $M=L\geq N-1$; the plug-in ratio estimate does not inherit its finite-pool unbiasedness. The resulting pairwise weight estimate is
\[
    \widehat w_N(x,a,b)
    =
    \widehat R_N(x,a)
    \cdot
    \frac{\rho(b\mid x)}{\pi_{b,2}(b\mid x)}.
\]

A particularly important special case is
\[
    \rho=\pi_{b,2}.
\]
Then the second density ratio cancels and
\[
    w_N(x,a,b)
    =
    \frac{q_N(a\mid x)}{\pi_{b,1}(a\mid x)}
    =
    g_N\!\left(F_{x,1}(a),F_{x,1}^-(a)\right).
\]
Hence the doubly robust estimator becomes
\[
    \widehat W_N(\pi_{b,2})
    =
    \frac{1}{n}
    \sum_{i=1}^n
    \left[
        \widehat m_N(X_i;\pi_{b,2})
        +
        \widehat R_N(X_i,A_{i1})
        \{Y_i-\widehat\mu(X_i,A_{i1},A_{i2})\}
    \right].
\]

If, additionally,
\[
    \pi_{b,1}=\pi_{b,2}=\rho=\pi_{\mathrm{ref}},
\]
then both logged responses are independent samples from the same reference model,
the BoN policy is obtained by reranking samples from that same model, and
the target value is
\[
    W_N(\pi_{\mathrm{ref}})
    =
    \mathbb{E}_{X,A\sim q_N(\cdot\mid X),\,B\sim \pi_{\mathrm{ref}}(\cdot\mid X)}
    \left[
        \mu(X,A,B)
    \right].
\]
In words, $W_N(\pi_{\mathrm{ref}})$ is the probability that a BoN
response beats a standard response from the reference model. In this common
case, the only density ratio required for debiasing is the BoN ratio
\[
    R_N(x,a)
    =
    g_N\!\left(F_{x,1}(a),F_{x,1}^-(a)\right),
\]
which can be estimated using samples from the reference model alone.

This extension shows that the proposed BoN evaluation approach carries
over naturally to preference-learning settings. The key requirement is that the
logged comparison policy factorizes, and that the first behavior marginal is
the policy from which BoN candidates are sampled. If the behavior pair
distribution does not factorize, then the required density ratio generally
depends on the full joint distribution of $(A_1,A_2)$ and cannot be recovered
from samples from the marginal policies alone.

\end{document}